\documentclass[hidelinks,letterpaper,10pt,conference]{ieeeconf}

\newif\ifanonymous
\anonymousfalse         % RA-L submission: hide authors, affiliations, funding
\IEEEoverridecommandlockouts
\usepackage[dvipsnames]{xcolor}
\usepackage{cite}
\usepackage{amsmath,amssymb,amsfonts}
\usepackage{graphicx}
\usepackage{epstopdf}
\usepackage{textcomp}
\usepackage{comment}
\usepackage{subcaption}
\usepackage[font=small]{caption}
\usepackage{float}
\usepackage{bm}
\usepackage{booktabs}
\usepackage{soul}
\usepackage{algpseudocode}
\usepackage{tikz}
\usetikzlibrary{arrows.meta,calc,positioning,fit}
\let\labelindent\relax
\usepackage{enumitem}

\usepackage{hyperref}

\setlist[enumerate]{leftmargin=*}
\setlist[itemize]{leftmargin=*}

\usepackage{amsthm}

\newtheorem{lemma}{Lemma}
\newtheorem{corollary}{Corollary}

\newtheorem{theorem}{Theorem}
\newtheorem{remark}{Remark}

\def\BibTeX{{\rm B\kern-.05em{\sc i\kern-.025em b}\kern-.08em
    T\kern-.1667em\lower.7ex\hbox{E}\kern-.125emX}}

\title{\Huge Exact Thrust-Reversal Limits of Bidirectional Propellers under Bounded Motor Inputs}

\ifanonymous
\author{Anonymous Authors}
\else
\author{Ahmed Ali$^{*}$, Chiara Gabellieri$^{*}$, Antonio Franchi$^{*,\dagger}$
\thanks{$^*$ Robotics and Mechatronics Department, Electrical Engineering, Mathematics, and Computer Science (EEMCS) Faculty, University of Twente, 7500 AE Enschede, The Netherlands. {\footnotesize ahmed.ali@utwente.nl, c.gabellieri@utwente.nl, schol@r-franchi.eu}}
\thanks{$^\dagger$ Department of Computer, Control and Management Engineering, Sapienza University of Rome, 00185 Rome, Italy. {\footnotesize schol@r-franchi.eu}}
\thanks{This work was partially funded by the Horizon Europe research agreement no. 101120732 (AUTOASSESS) and NWO OTP AVIATOR.}
}
\fi

\begin{document}

\maketitle

\begin{abstract}
Bidirectional propellers are often treated as signed thrust sources, but their thrust is a signed-quadratic function of rotor speed. 
Thus, thrust reversal necessarily occurs through zero rotor speed, where the ability of a bounded motor torque to change thrust collapses. 
This work formalizes this obstruction by studying exact thrust-trajectory reproducibility under bounded motor inputs with prescribed smoothness. 
We derive a normalized thrust-coordinate model with vanishing input gain at zero thrust, and prove necessary and sufficient reproducibility conditions in terms of the zero-crossing order of the desired thrust. 
Generic reversals, in which thrust crosses zero with nonzero slope, require unbounded motor input; the resulting conditions provide direct design rules for shaping thrust reversals that avoid singular motor commands. 
We also derive the corresponding current and voltage regularity requirements for a DC motor driving a bidirectional propeller. 
Experiments on a motor-propeller setup validate the predicted reversal-order effects, showing localized current/voltage peaks and thrust-tracking degradation for linear reversals, but not for higher-order reversals. 
These results expose an intrinsic actuator-level limitation that must be considered in force, acceleration, and interaction-control references for aerial robots.
\end{abstract}

%%%%%%%%%%%%%%%%%%%%%%%%%%%%%%%%%%%%%%%%
\section{Introduction}
%%%%%%%%%%%%%%%%%%%%%%%%%%%%%%%%%%%%%%%%

Robots that physically interact with their environment must regulate motion and force through the same mechanical interface. 
Impedance control prescribed the dynamic relation between motion and interaction forces~\cite{hogan1985impedance1}, hybrid position/force control separated task directions for motion and force objectives~\cite{raibert1981hybrid}, and operational-space control formulated motion and force directly at the end effector~\cite{khatib1987operational}. 
These ideas are central in aerial robotics, where contact tasks are executed by floating-base systems with tightly coupled actuation, sensing, and interaction dynamics.

Aerial physical interaction has motivated platforms and controllers beyond standard underactuated multirotors. 
Surveys of aerial manipulation describe systems that grasp, push, inspect, and physically interact with the environment, and classify their platforms, interaction mechanisms, modeling, estimation, and control methods~\cite{ruggiero2018aerial,khamseh2018aerial}. 
The AEROARMS project developed aerial manipulation technologies for outdoor inspection and maintenance with dual arms and multidirectional thrusters~\cite{ollero2018aeroarms}. 
The flying-end-effector paradigm showed full-pose control and full-wrench interaction through a rigidly attached end effector~\cite{ryll2019flying}. 
Bodie et al. demonstrated omnidirectional aerial platforms for contact-based inspection and later validated active interaction force control in different flight orientations~\cite{bodie2019omnidirectional,bodie2021active}.

\begin{figure}[t]
    \centering
    \resizebox{\columnwidth}{!}{%
    \begin{tikzpicture}[
        line cap=round,
        line join=round,
        >=Latex,
        font=\footnotesize,
        box/.style={draw=black, rounded corners=2pt, line width=0.6pt, fill=gray!5, inner sep=4pt},
        bluearrow/.style={blue!70!black, line width=1.0pt, -{Latex[length=2.2mm,width=1.6mm]}},
        redarrow/.style={red!75!black, line width=1.0pt, -{Latex[length=2.2mm,width=1.6mm]}},
        axis/.style={black!70, line width=0.55pt, -{Latex[length=1.6mm,width=1.1mm]}},
        curve/.style={blue!70!black, line width=1.0pt},
        redcurve/.style={red!75!black, line width=1.0pt},
        motor/.style={draw=green!45!black, line width=0.7pt, fill=green!20},
        blade/.style={draw=black, line width=0.6pt, fill=gray!55}
    ]

    % Exact bounding box to reduce external whitespace
    \path[use as bounding box] (-0.02,0.05) rectangle (7.22,-6.78);

    % =====================================================
    % Panel 1: static map
    % =====================================================
    \node[box, minimum width=7.2cm, minimum height=1.60cm, anchor=north west] at (0,0) {};

    % Horizontal shift of propeller icon and bidirectional thrust arrows
	\def\propellerXshift{0.23cm}

    \begin{scope}[xshift=\propellerXshift]

    % Propeller icon, drawn as an oblique disk viewed from an angle
    \coordinate (hub) at (1.15,-0.80);

    % -----------------------------------------------------
    % Motor as a small oblique cylinder
    % -----------------------------------------------------
    \coordinate (motorC) at (0.36,-0.80);
    \def\motorHalfLen{0.28}
    \def\motorRx{0.045}
    \def\motorRy{0.155}

    % Motor body, elliptic front/back caps
    \draw[draw=green!45!black, line width=0.7pt, fill=green!18]
        ($(motorC)+(-\motorHalfLen,-\motorRy)$)
        -- ($(motorC)+(\motorHalfLen,-\motorRy)$)
        arc[start angle=-90,end angle=90,x radius=\motorRx,y radius=\motorRy]
        -- ($(motorC)+(-\motorHalfLen,\motorRy)$)
        arc[start angle=90,end angle=270,x radius=\motorRx,y radius=\motorRy]
        -- cycle;

    % Back cap line
    \draw[green!45!black, line width=0.55pt]
        ($(motorC)+(-\motorHalfLen,0)$)
        ellipse [x radius=\motorRx, y radius=\motorRy];

    % Front cap line
    \draw[green!45!black, line width=0.65pt]
        ($(motorC)+(\motorHalfLen,0)$)
        ellipse [x radius=\motorRx, y radius=\motorRy];

    % -----------------------------------------------------
    % Propeller disk and blades
    % -----------------------------------------------------

    % Light projected disk, suggesting propeller plane at an angle
    \draw[gray!55, line width=0.45pt, fill=gray!8]
        (hub) ellipse [x radius=0.34, y radius=0.74];

    % Foreshortened propeller blades: x-compression mimics rotation about a vertical axis
    \begin{scope}[shift={(hub)}, xscale=0.45]
        % rear blade pair, slightly lighter
        \foreach \ang in {135,315}{
            \begin{scope}[rotate=\ang]
                \draw[draw=black!55, line width=0.45pt, fill=gray!35]
                    (0.05,-0.035)
                    .. controls (0.34,-0.08) and (0.62,-0.05) .. (0.78,0.01)
                    .. controls (0.65,0.12) and (0.34,0.10) .. (0.05,0.035)
                    -- cycle;
            \end{scope}
        }

        % front blade pair
        \foreach \ang in {45,225}{
            \begin{scope}[rotate=\ang]
                \draw[blade]
                    (0.05,-0.040)
                    .. controls (0.36,-0.085) and (0.66,-0.055) .. (0.84,0.01)
                    .. controls (0.70,0.13) and (0.36,0.11) .. (0.05,0.040)
                    -- cycle;
            \end{scope}
        }
    \end{scope}

    % Elliptic hub on top of blades, consistent with the oblique propeller view
    \draw[draw=black, fill=green!35, line width=0.7pt]
        (hub) ellipse [x radius=0.055, y radius=0.105];
    \draw[draw=green!45!black, fill=green!50, line width=0.5pt]
        (hub) ellipse [x radius=0.025, y radius=0.048];

    % Grey shaft from motor front to back of hub
    \draw[gray!65, line width=2.0pt]
        ($(motorC)+(\motorHalfLen+\motorRx,0)$) -- ($(hub)+(-0.10,0)$);
        
    % Bidirectional thrust arrows, kept on the same thrust axis
    \draw[bluearrow] (1.38,-0.80) -- (2.08,-0.80);
    \draw[bluearrow] (0.92,-0.80) -- (0.25,-0.80);

\end{scope}

    % Static map text
    \node[anchor=west] at (2.78,-0.36) {\textbf{Static thrust map}};
    \node[anchor=west] at (2.78,-0.78) {\(F=c_f\omega|\omega|\)};
    \node[anchor=west] at (2.78,-1.20) {\(\omega_d=\operatorname{sign}(F_d)\sqrt{|F_d|/c_f}\)};

    % =====================================================
    % Panel 2: dynamic realization
    % =====================================================
    \node[box, minimum width=7.2cm, minimum height=1.55cm, anchor=north west] at (0,-1.82) {};

    % Left explanatory text
    \node[anchor=west] at (0.10,-2.17) {\textbf{But \(\omega\) is a state}};
    \node[anchor=west] at (0.10,-2.56) {changing \(\omega(t)\) requires};
    \node[anchor=west] at (0.10,-2.95) {bounded physical commands};

    % Command chain to the right of the text
    \node[draw=black, rounded corners=2pt, fill=white, inner sep=2.5pt] (Fd) at (4.0,-2.55) {\(F_d(t)\)};
    \node[draw=black, rounded corners=2pt, fill=white, inner sep=2.5pt] (wd) at (5.2,-2.55) {\(\omega_d(t)\)};
    \node[draw=black, rounded corners=2pt, fill=white, inner sep=2.5pt] (cmd) at (6.55,-2.55) {\(\tau_m,i,V\)};
    \draw[bluearrow] (Fd) -- (wd);
    \draw[redarrow] (wd) -- (cmd);

    % =====================================================
    % Panel 3: singular reversal
    % =====================================================
    \node[box, minimum width=7.2cm, minimum height=2.35cm, anchor=north west] at (0,-3.63) {};

    % Left plot: linear thrust crossing
    \begin{scope}[shift={(0.66,-4.80)}]
        \draw[axis] (-0.05,0) -- (2.00,0);
        \draw[axis] (1.00,-0.50) -- (1.00,0.63);
        \draw[curve] (0.22,-0.43) -- (1.78,0.43);
        \draw[red!75!black, line width=0.8pt] (1.00,-0.10) -- (1.00,0.10);
        \draw[red!75!black, line width=0.8pt] (0.90,0) -- (1.10,0);
        \node[anchor=south] at (1.00,0.70) {\scriptsize \(F_d\)};
        \node[anchor=north] at (1.00,-0.58) {\scriptsize thrust reversal};
    \end{scope}

    % Center arrow
    \draw[redarrow] (3.04,-4.80) -- (4.04,-4.80);

    % Right plot: singular command
    \begin{scope}[shift={(4.50,-4.80)}]
        \draw[axis] (-0.05,0) -- (2.00,0);
        \draw[axis] (1.00,-0.50) -- (1.00,0.63);
        \draw[redcurve] (0.25,0.10)
            .. controls (0.65,0.10) and (0.85,0.34) .. (0.96,0.60);
        \draw[redcurve] (1.08,-0.60)
            .. controls (1.18,-0.34) and (1.40,-0.10) .. (1.78,-0.10);
        \draw[red!75!black, dashed, line width=0.7pt] (1.00,-0.50) -- (1.00,0.61);
        \node[anchor=south] at (1.00,0.70) {\scriptsize \(\tau_m,i,V\)};
        \node[anchor=north] at (1.00,-0.58) {\scriptsize singular demand};
    \end{scope}

    % Bottom message
    \node[anchor=center, font=\bfseries\footnotesize] at (3.60,-6.45)
        {static inversion \(\not\Rightarrow\) dynamic reproducibility};

    \end{tikzpicture}%
    }
    \caption{Why static thrust inversion is insufficient. 
    The static map \(F=c_f\omega|\omega|\) gives a rotor speed for each desired thrust, but rotor speed is a physical state. 
    Changing it through a thrust reversal requires bounded torque, current, and voltage over time; generic reversals make these ideal motor commands singular.}
    \label{fig:static_vs_dynamic}
\end{figure}

A key design route toward such capabilities is to increase the actuation freedom of the propeller system~\cite{ramy}. 
Tilting propellers can make a quadrotor fully actuated by adding tilt inputs to all rotors~\cite{ryll2015overactuated}, or to only one of them~\cite{aliPendu}. 
The Voliro platform demonstrated an omnidirectional hexarotor with tiltable rotors, including mechanical design, control allocation, and experiments~\cite{kamel2018voliro}. 
Morphing multirotors can switch among energy-efficient, fully actuated, and omnidirectional configurations~\cite{aboudorra2024omnimorph}. 
Bidirectional propellers provide another route: by reversing rotor speed, a fixed-pitch propeller can generate thrust in both axial directions. 
Maier integrated bidirectional fixed-pitch propellers into multirotor modeling, control, allocation, and experimental flight~\cite{maier2018bidirectional}. 
This capability is attractive for inverted flight, omnidirectional wrench generation, and contact tasks in which the desired interaction force may change sign~\cite{brescianini2018omnidirectional,park2018odar,lee2025aggressive}.

At first sight, bidirectional thrust seems to remove the main difficulty: for each desired thrust, the static thrust map provides a corresponding rotor speed, so thrust tracking appears reducible to speed tracking. 
Fig.~\ref{fig:static_vs_dynamic} shows why this reasoning is incomplete. 
Rotor speed is a physical state, not an instantaneous algebraic command: changing it through a reversal requires bounded motor torque, current, and voltage over time. 
Equivalently, planning directly in a signed-thrust coordinate, even with thrust bounds or rate limits, can hide the differential constraint imposed by the motor-propeller dynamics. 
Thus, pointwise invertibility of the thrust map does not imply dynamic reproducibility of a thrust trajectory.

This gap matters whenever a controller commands forces or accelerations that change sign. 
In aerial physical interaction, impedance, admittance, and force-tracking controllers may generate thrust references crossing zero during contact transitions, oscillatory interaction, sliding, or disturbance rejection. 
If such references are sent to bidirectional propellers without accounting for dynamic realizability, the low-level motor command can become unbounded or nonsmooth at reversal. 
The resulting tracking degradation is therefore not merely an implementation artifact, but the manifestation of an intrinsic actuator-level limitation.

This work studies this hidden dynamic constraint rigorously. 
We characterize which thrust trajectories can be reproduced exactly by bidirectional propellers under bounded motor inputs and prescribed input smoothness. 
The main conclusion is simple: generic thrust reversals are not exactly reproducible with bounded inputs, while reproducible reversals must be sufficiently flat at zero thrust. 
The same obstruction appears at the electrical level, where motor-current and motor-voltage references inherit corresponding boundedness and smoothness requirements.

%The contributions of this work are as follows:
%\begin{itemize}
%    \item We derive a normalized signed-quadratic model of a bidirectional propeller and a thrust-coordinate normal form that exposes the loss of input authority at thrust reversal.
%    \item We formulate an exact thrust-trajectory reproducibility problem under bounded motor inputs and prescribed input smoothness.
%    \item We prove necessary and sufficient conditions for reproducibility, showing that generic thrust reversals are impossible with bounded inputs, while smoother reversals require increasingly high-order flatness.
%    \item We translate the reproducibility conditions into motor-current and motor-voltage regularity requirements for a DC motor driving a bidirectional propeller.
%    \item We provide practitioner-oriented design rules for choosing thrust zero-crossing orders and generating dynamically reproducible reversal trajectories.
%    \item We experimentally validate the predicted reversal-order effects on a motor-propeller setup by comparing linear and higher-order thrust reversals.
%\end{itemize}

The contributions of this work are as follows:
\begin{itemize}
    \item We derive a normalized signed-quadratic propeller model and a thrust-coordinate normal form exposing the loss of input authority at reversal; see Section~\ref{sec:modeling_normal_form} and Fig.~\ref{fig:f2_map}.
    \item We formulate exact thrust-trajectory reproducibility under bounded motor inputs with prescribed smoothness; see Section~\ref{sec:main_result}.
    \item We prove necessary and sufficient reproducibility conditions, showing that generic thrust reversals require unbounded inputs, while reproducible reversals need increasing flatness; see Theorem~\ref{thm:in_out_feasibility}.
    \item We translate these conditions into motor-current and motor-voltage regularity requirements; see Corollary~\ref{cor:dc_motor_references}.
    \item We provide design rules for choosing thrust zero-crossing orders and generating dynamically reproducible reversal trajectories; see Table~\ref{tab:practical_rules} and \eqref{eq:sin_family}.
    \item We experimentally validate the predicted reversal-order effects on a motor-propeller setup; see Figs.~\ref{fig:experimental_setup} and~\ref{fig:dc_exp_cases}.
\end{itemize}

%%%%%%%%%%%%%%%%%%%%%%%%%%%%%%%%%%%%%%%%
\section{System Model and Normal Form}
\label{sec:modeling_normal_form}
%%%%%%%%%%%%%%%%%%%%%%%%%%%%%%%%%%%%%%%%

\subsubsection{Bidirectional propeller dynamics}

Consider a fixed-pitch propeller driven by a motor that can rotate in both directions. 
Let \(\omega\in\mathbb{R}\) denote the signed rotor speed, \(J>0\) the rotor-propeller inertia, \(\tau_m\) the motor torque, and \(c_\tau>0\) the aerodynamic drag coefficient. 
Under the usual quasi-steady approximation, the aerodynamic drag torque is modeled as \(c_\tau\omega|\omega|\), yielding
\begin{equation}
    J\dot{\omega} = \tau_m - c_\tau\omega|\omega|.
    \label{eq:omega_physical}
\end{equation}
With the normalized motor input \(u:=\tau_m/J\) and the normalized drag coefficient \(k:=c_\tau/J>0\), \eqref{eq:omega_physical} becomes
\begin{equation}
    \dot{\omega} = u-k\omega|\omega|.
    \label{eq:omega_dyn}
\end{equation}
The axial thrust generated by an ideal symmetric bidirectional propeller is modeled as
\begin{equation}
    F = c_f\omega|\omega|,
    \label{eq:thrust_model}
\end{equation}
where \(c_f>0\) is the thrust coefficient. 
Thus, the same signed-quadratic structure appears in both the aerodynamic drag and the generated thrust. 
In particular, any thrust reversal must pass through \(F=0\), equivalently \(\omega=0\). 
At this point, the slope of the map \(\omega\mapsto c_f\omega|\omega|\) vanishes, so the propeller cannot behave as an ideal signed force source through zero thrust.

\subsubsection{Minimal thrust-channel model}

\begin{figure}[t]
    \centering
    \begin{tikzpicture}[
        scale=0.8,
        line cap=round,
        line join=round,
        >=Latex,
        font=\small,
        ground/.style={black, line width=0.8pt},
        body/.style={draw=black, line width=0.9pt, fill=gray!35},
        wheel/.style={draw=black, line width=0.9pt, fill=gray!45},
        motor/.style={draw=green!45!black, line width=0.9pt, fill=green!25},
        blade/.style={draw=black, line width=0.8pt, fill=gray!55},
        thrust/.style={blue!70!black, line width=1.3pt, -{Latex[length=3mm,width=2mm]}},
        velocity/.style={black, dashed, line width=1.1pt, -{Latex[length=3mm,width=2mm]}},
        torque/.style={green!45!black, line width=1.1pt, -{Latex[length=2.5mm,width=1.8mm]}},
        drag/.style={blue!70!black, dashed, line width=1.1pt, -{Latex[length=2.5mm,width=1.8mm]}},
        label/.style={font=\small}
    ]

    % Ground line and hatching
    \draw[ground] (-0.4,0) -- (9.2,0);
    \foreach \x in {-0.2,0.05,...,8.9}
        \draw[gray!25, line width=0.4pt] (\x,-0.03) -- ++(0.22,-0.18);

    % Cart body
    \draw[body] (0.65,0.45) rectangle (2.45,1.75);
    \node[font=\large] at (1.55,1.10) {$M$};

    % Wheels
    \draw[wheel] (0.88,0.35) circle (0.28);
    \draw[wheel] (2.22,0.35) circle (0.28);

    % Support
    \draw[black, line width=0.9pt] (1.25,1.75) -- (1.25,2.75);
    \draw[black, line width=0.9pt] (1.25,2.75) -- (1.65,2.75);

    % -----------------------------------------------------
    % Motor as an oblique cylinder
    % -----------------------------------------------------
    \coordinate (motorC) at (2.12,2.75);
    \coordinate (hub) at (3.85,2.75);

    \def\motorHalfLen{0.42}
    \def\motorRx{0.075}
    \def\motorRy{0.22}

    % Motor body
    \draw[draw=green!45!black, line width=0.9pt, fill=green!22]
        ($(motorC)+(-\motorHalfLen,-\motorRy)$)
        -- ($(motorC)+(\motorHalfLen,-\motorRy)$)
        arc[start angle=-90,end angle=90,x radius=\motorRx,y radius=\motorRy]
        -- ($(motorC)+(-\motorHalfLen,\motorRy)$)
        arc[start angle=90,end angle=270,x radius=\motorRx,y radius=\motorRy]
        -- cycle;

    % Back cap
    \draw[green!45!black, line width=0.65pt]
        ($(motorC)+(-\motorHalfLen,0)$)
        ellipse [x radius=\motorRx, y radius=\motorRy];

    % Front cap
    \draw[green!45!black, line width=0.75pt]
        ($(motorC)+(\motorHalfLen,0)$)
        ellipse [x radius=\motorRx, y radius=\motorRy];

    % Grey shaft from motor front to back of hub
    \draw[gray!65, line width=2.0pt]
        ($(motorC)+(\motorHalfLen+\motorRx,0)$) -- ($(hub)+(-0.10,0)$);

    % -----------------------------------------------------
    % Oblique propeller disk and blades
    % -----------------------------------------------------

    % Light projected disk, suggesting propeller plane seen from an angle
    \draw[gray!55, line width=0.45pt, fill=gray!8]
        (hub) ellipse [x radius=0.34, y radius=0.74];

    % Foreshortened propeller blades
    \begin{scope}[shift={(hub)}, xscale=0.45]
        % Rear blade pair
        \foreach \ang in {135,315}{
            \begin{scope}[rotate=\ang]
                \draw[draw=black!55, line width=0.45pt, fill=gray!35]
                    (0.05,-0.035)
                    .. controls (0.34,-0.08) and (0.62,-0.05) .. (0.78,0.01)
                    .. controls (0.65,0.12) and (0.34,0.10) .. (0.05,0.035)
                    -- cycle;
            \end{scope}
        }

        % Front blade pair
        \foreach \ang in {45,225}{
            \begin{scope}[rotate=\ang]
                \draw[blade]
                    (0.05,-0.040)
                    .. controls (0.36,-0.085) and (0.66,-0.055) .. (0.84,0.01)
                    .. controls (0.70,0.13) and (0.36,0.11) .. (0.05,0.040)
                    -- cycle;
            \end{scope}
        }
    \end{scope}

    % Elliptic hub on top of blades
    \draw[draw=black, fill=green!35, line width=0.8pt]
        (hub) ellipse [x radius=0.075, y radius=0.135];
    \draw[draw=green!45!black, fill=green!50, line width=0.55pt]
        (hub) ellipse [x radius=0.035, y radius=0.062];

    % -----------------------------------------------------
    % Torque, drag, thrust, and velocity annotations
    % -----------------------------------------------------

    % Motor torque arrow
    \begin{scope}
        \coordinate (motorFront) at ($(motorC)+(\motorHalfLen+\motorRx+0.5,0)$);
    
        \draw[torque]
            ($(motorFront)+(-0.02,-0.30)$)
            .. controls ($(motorFront)+(-0.56,-0.28)$)
                     and ($(motorFront)+(-0.26,0.28)$)
            .. ($(motorFront)+(0.22,0.1)$);
    
        \node[label, anchor=south] at ($(motorFront)+(-0.16,0.35)$) {$\tau_m$};
    \end{scope}

    % Drag torque arrow near propeller
    \draw[drag] (3.36,2.08) arc[start angle=220,end angle=305,radius=0.72];
    \node[label, anchor=west] at (4.35,1.60)
        {$\tau_d=c_\tau\zeta_2|\zeta_2|$};

    % Rotor speed label
    \node[label, anchor=south] at (3.45,1.20) {$\zeta_2=\omega$};

    % Thrust arrow and label
    \draw[thrust] (4.12,2.75) -- (5.35,2.75);
    \node[label, anchor=west] at (5.48,2.75)
        {$F=c_f\zeta_2|\zeta_2|$};

    % Velocity arrow and label
    \draw[velocity] (2.52,0.95) -- (5.85,0.95);
    \node[label, anchor=west] at (6.00,0.95) {$v$};

    \end{tikzpicture}
    \caption{Minimal thrust-channel abstraction. 
    A bidirectional propeller driven by the motor torque \(\tau_m\) applies the signed-quadratic thrust \(F=c_f\zeta_2|\zeta_2|\) to the one-dimensional mass.}
    \label{fig:cartmodel}
\end{figure}

To isolate this actuator-level effect, consider the minimal one-dimensional channel shown in Fig.~\ref{fig:cartmodel}. 
A mass \(M>0\) with velocity \(v\) is actuated by the bidirectional propeller. Applying Newton's decond law gives
\begin{equation}
    M\dot v = F = c_f\omega|\omega|.
    \label{eq:mass_physical}
\end{equation}
With \(\zeta_1:=Mv/c_f\) and \(\zeta_2:=\omega\), \eqref{eq:omega_dyn} and \eqref{eq:mass_physical} yield
\begin{equation}
\begin{aligned}
    \dot{\zeta}_1 &= \zeta_2|\zeta_2|,\\
    \dot{\zeta}_2 &= u-k\zeta_2|\zeta_2|.
\end{aligned}
\label{eq:Sigma_zeta}
\end{equation}
where $k>0$. This model does not describe a full aerial robot. 
Rather, it captures the single thrust/acceleration channel in which the signed-quadratic reversal limitation appears. 
The normalization removes the parameters \(M\) and \(c_f\), while preserving the intrinsic dependence of thrust on \(\zeta_2|\zeta_2|\).

\subsubsection{Thrust-coordinate normal form}

Define \(x_1:=\zeta_1\) and the normalized thrust coordinate \(x_2:=\zeta_2|\zeta_2|\). 
Then \(x_2\) is the normalized force applied to the mass, \(\dot x_1=x_2\), and \(\zeta_2=\operatorname{sign}(x_2)\sqrt{|x_2|}\). 
While the map \(\zeta_2\mapsto x_2\) is continuous and one-to-one, it is not a diffeomorphism at \(x_2=0\). 
Differentiating \(x_2=\zeta_2|\zeta_2|\) along \eqref{eq:Sigma_zeta} gives the normal form
\begin{equation}
\begin{aligned}
    \dot{x}_1 &= x_2,\\
    \dot{x}_2 &=
        -2k\,\operatorname{sign}(x_2)|x_2|^{3/2}
        +2\sqrt{|x_2|}\,u,\\
    y &= x_1.
\end{aligned}
\label{eq:sys2}
\end{equation}
Here \(y\) is the normalized output whose derivative is the normalized thrust, i.e., \(\dot y=x_2\). 
 \eqref{eq:sys2} is an input-affine, \(\dot x=f(x)+g(x)u\), with input gain \(g_2(x_2)=2\sqrt{|x_2|}\) in the thrust dynamics, which is strictly positive for \(x_2\neq0\) and vanishes at \(x_2=0\). For \(x_2\neq0\), one may formally invert the second equation in \eqref{eq:sys2} to assign a desired thrust derivative. 
However, this inverse contains the factor \(1/\sqrt{|x_2|}\), which becomes singular at zero thrust. 
Moreover, the drift term \(x_2\mapsto\operatorname{sign}(x_2)|x_2|^{3/2}=x_2\sqrt{|x_2|}\) is in \(\mathcal C^1(\mathbb R)\setminus\mathcal C^2(\mathbb R)\), whereas the input gain \(x_2\mapsto\sqrt{|x_2|}\) is in \(\mathcal C^0(\mathbb R)\setminus\mathcal C^1(\mathbb R)\). 
Thus, the drift remains continuously differentiable at zero thrust while the input gain is only continuous and collapses to zero. 
Fig.~\ref{fig:f2_map} visualizes this structural degeneracy, which is the origin of the reproducibility constraints derived in Section~\ref{sec:main_result}.

\begin{figure}[t]
    \centering
    \includegraphics[width=0.85\columnwidth]{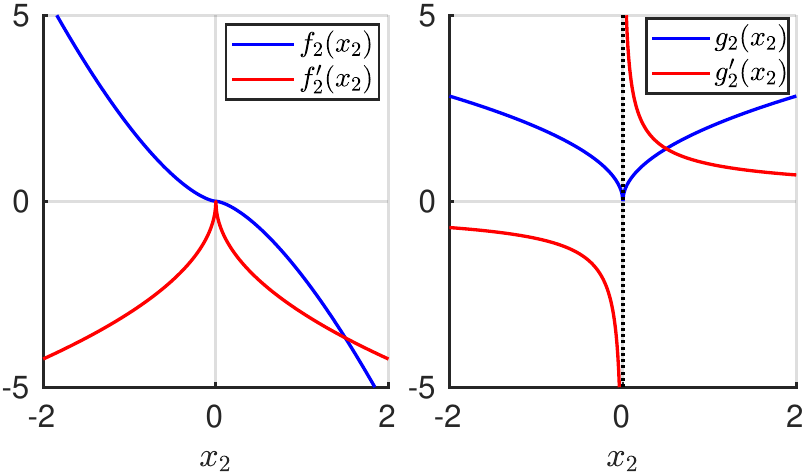}
    \caption{Normal-form drift and input gain as functions of the normalized thrust \(x_2\). 
    The input gain \(g_2(x_2)=2\sqrt{|x_2|}\) vanishes at \(x_2=0\), i.e., at thrust reversal.}
    \label{fig:f2_map}
\end{figure}

%%%%%%%%%%%%%%%%%%%%%%%%%%%%%%%%%%%%%%%%
\section{Reproducible Trajectories and Thrust Reversals}
\label{sec:main_result}
%%%%%%%%%%%%%%%%%%%%%%%%%%%%%%%%%%%%%%%%

We characterize the output trajectories that system \eqref{eq:sys2} can exactly reproduce under bounded motor inputs. 
Recall that \(y=x_1\) and \(\dot y=x_2\), so zeros of \(\dot y\) correspond to normalized thrust reversals. 
Let \(\mathcal T:=[t_i,t_f]\) be compact and let the admissible input set be \(\mathbb U:=\{u\in\mathbb R:-\underline u\le u\le\overline u\}\), with \(0<\underline u<\overline u<+\infty\). 
For an integer \(m\ge0\), let \(\mathcal C^m(\mathcal T,\mathbb U)\) denote the set of \(\mathcal C^m\) functions \(u:\mathcal T\to\mathbb R\) satisfying \(u(t)\in\mathbb U\) for all \(t\in\mathcal T\). 
For \(\dot y\neq0\), solving \eqref{eq:sys2} for \(u\) gives the unique inverse
\begin{equation}
    u=\frac{\ddot y}{2\sqrt{|\dot y|}}+k\dot y .
    \label{eq:inv_u}
\end{equation}
Thus, away from thrust reversal, input bounds impose the following feasibility set
\begin{equation}
\mathbb S :=
\left\{
(\dot y,\ddot y)\in\mathbb R^2:
\dot y\neq0,\ 
-\underline u
\le
\frac{\ddot y}{2\sqrt{|\dot y|}}+k\dot y
\le
\overline u
\right\}.
\label{eq:feasibility_set}
\end{equation}

Fig.~\ref{fig:trjphase} visualizes this construction. 
Away from thrust reversal, i.e., for \(\dot y_d\neq0\), the bounded input interval \(\mathbb U\) maps through \eqref{eq:inv_u} to a feasible region in the \((\dot y_d,\ddot y_d)\)-plane. 
The singular line \(\dot y_d=0\) is not part of \(\mathbb S\); the behavior of trajectories approaching this line is classified by Theorem~\ref{thm:in_out_feasibility}. We first state the elementary regularity fact used at reversal instants.
\begin{figure}[t]
    \centering
    \includegraphics[width=1\columnwidth]{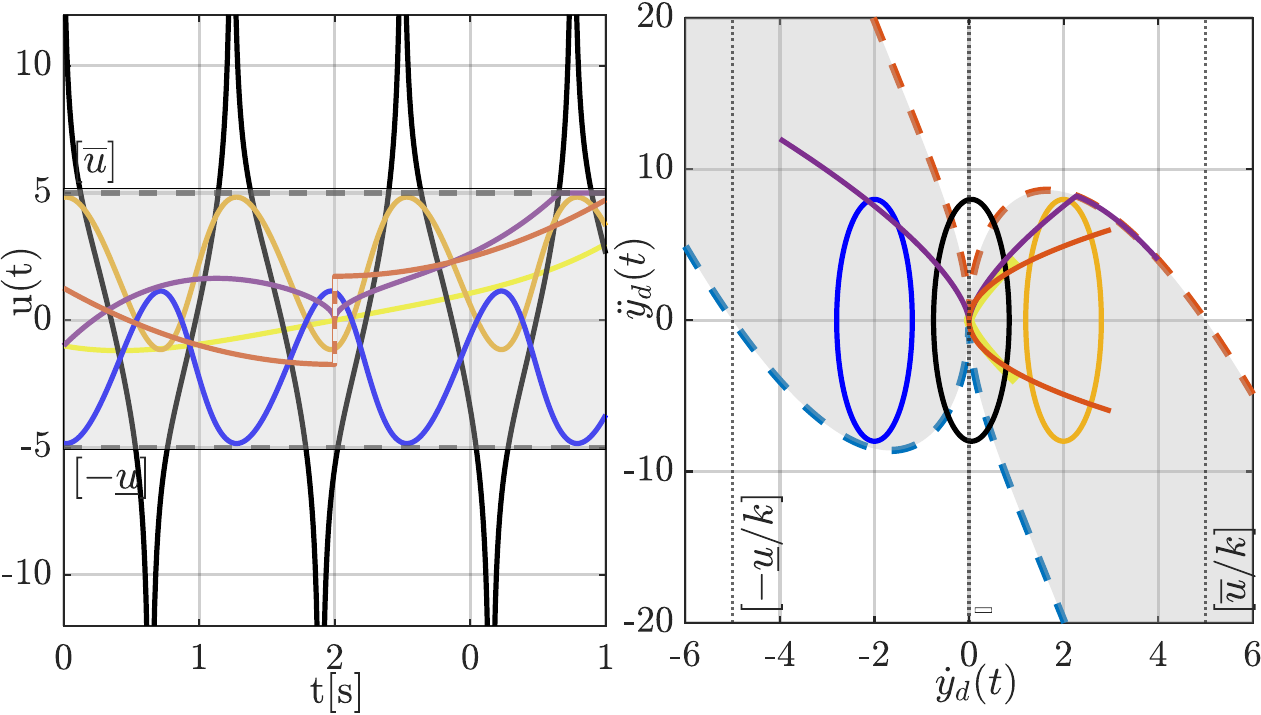}
    \caption{Input bounds dictate the desired trajectory $y_d$ feasibility away from thrust reversal. 
    The interval \(\mathbb U\) of admissible motor inputs maps, through the inverse \eqref{eq:inv_u}, to the shaded feasible set \(\mathbb S\) in the \((\dot y_d,\ddot y_d)\)-plane. 
    The boundary curves correspond to input saturation. 
    The line \(\dot y_d=0\) is excluded from \(\mathbb S\), and Theorem~\ref{thm:in_out_feasibility} characterizes which trajectories can approach this line with bounded, possibly nonsmooth, inputs.}
    \label{fig:trjphase}
\end{figure}

\begin{lemma}[Regularity of reversal monomials]
\label{lem:reversal_monomial}
Let \(r\ge2\) be an integer and define, for \(s\neq0\), \(\phi_r(s):=\operatorname{sign}(s)^{r-2}|s|^{(r-3)/2}\). 
Then \(\phi_2\) is unbounded, \(\phi_3\) is bounded but discontinuous, and \(\phi_r\) is continuous for every \(r\ge4\). 
Moreover, for any integer \(m\ge0\), \(\phi_r\in\mathcal C^m\) at \(s=0\) if either \(r\ge R_m\), where
\begin{equation}
R_m=
\begin{cases}
2m+4, & m\ \mathrm{even},\\
2m+3, & m\ \mathrm{odd},
\end{cases}
\label{eq:Rm}
\end{equation}
or \(r=4q+1\) for some integer \(q\ge1\). 
\end{lemma}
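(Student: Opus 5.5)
The plan is to reduce everything to the two elementary functions \(|s|^\alpha\) and \(\operatorname{sign}(s)|s|^\alpha\), with \(\alpha=(r-3)/2\). Since \(\operatorname{sign}(s)^{r-2}\) equals \(1\) for even \(r\) and \(\operatorname{sign}(s)\) for odd \(r\), we have \(\phi_r(s)=|s|^\alpha\) when \(r\) is even and \(\phi_r(s)=\operatorname{sign}(s)|s|^\alpha\) when \(r\) is odd, for \(s\neq0\). In both cases \(\phi_r\) is smooth on \(s\neq0\), so every regularity question concerns only the point \(s=0\).

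\textbf{Low orders.} For \(r=2\), \(\alpha=-1/2\) and \(\phi_2=|s|^{-1/2}\to\infty\), so \(\phi_2\) is unbounded. For \(r=3\), \(\alpha=0\) and \(\phi_3=\operatorname{sign}(s)\), which is bounded with one-sided limits \(\pm1\), hence discontinuous however \(\phi_3(0)\) is assigned. For \(r\ge4\), \(\alpha\ge1/2>0\), so \(\phi_r(s)\to0\) and \(\phi_r\) is continuous once we set \(\phi_r(0)=0\).

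\textbf{Smoothness at zero.} I would use the standard fact that \(|s|^\alpha\) and \(\operatorname{sign}(s)|s|^\alpha\) are \(\mathcal C^m\) near \(0\) whenever \(\alpha>m\). Indeed, for \(s\neq0\) their \(j\)-th derivatives are constant multiples of \(\operatorname{sign}(s)^{\cdot}|s|^{\alpha-j}\), which tend to \(0\) for \(j\le m<\alpha\). Induction on \(j\), together with the mean value theorem (a continuous function whose derivative has a limit at \(0\) is differentiable there with that value), then gives existence and continuity of derivatives up to order \(m\) at \(0\).

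When \(\alpha\) is a nonnegative integer, an exact polynomial case is also available. If \(r\) is odd and \(\alpha\) is odd, then \(\operatorname{sign}(s)|s|^\alpha=s^\alpha\). If \(r\) is odd and \(\alpha\) is even, the sign factor breaks smoothness. Now \(r=4q+1\) gives \(\alpha=2q-1\), which is odd, so \(\phi_r=s^{2q-1}\) is a polynomial and is \(\mathcal C^\infty\). This settles the second alternative.

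\textbf{Threshold \(R_m\).} For the first alternative I would check that \(r\ge R_m\) forces \(\alpha>m\):
\begin{itemize}
\item for \(m\) even, \(r\ge2m+4\) gives \(\alpha\ge(2m+1)/2>m\);
\item for \(m\) odd, \(r\ge2m+3\) gives \(\alpha\ge m\). The borderline case \(r=2m+3\) yields \(\alpha=m\) with \(r\) odd and \(\alpha\) odd, so \(\phi_r=s^m\), a polynomial. Every \(r\ge2m+4\) gives \(\alpha>m\).
\end{itemize}

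\textbf{Main obstacle.} The only delicate step is the borderline bookkeeping: integer \(\alpha=m\) with the wrong parity (for example \(|s|^m\) with \(m\) odd) fails to be \(\mathcal C^m\). This is exactly why the even-\(m\) threshold is \(2m+4\) rather than \(2m+3\). For \(m\) even, \(r=2m+3\) gives \(\alpha=m\) even with \(r\) odd, so \(\phi_r=\operatorname{sign}(s)s^m\), which is not \(\mathcal C^m\). I would verify these parity cases explicitly to confirm the thresholds in \eqref{eq:Rm}.
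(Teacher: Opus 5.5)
Your proposal is correct and follows essentially the same route as the paper: split on the parity of \(r\), use that \(|s|^{\alpha}\) and \(\operatorname{sign}(s)|s|^{\alpha}\) are \(\mathcal C^m\) at the origin once \(\alpha=(r-3)/2>m\), and treat \(r=4q+1\) as the polynomial case \(\phi_r=s^{2q-1}\). Your mean-value justification of the \(\alpha>m\) criterion, and your check that the odd-\(m\) borderline \(r=2m+3\) is itself of the form \(4q+1\) (so \(\phi_r=s^m\)), make explicit what the paper leaves implicit in its \(r \bmod 4\) classification.
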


\begin{proof}
For \(r=2\), the exponent \((r-3)/2\) is negative, hence \(\phi_2\) is unbounded. 
For \(r=3\), \(\phi_3(s)=\operatorname{sign}(s)\), hence \(\phi_3\) is bounded but discontinuous. 
For \(r\ge4\), the exponent is positive, hence \(\phi_r(s)\to0\) as \(s\to0\), so \(\phi_r\) is continuous.

If \(r\) is even, then \(\operatorname{sign}(s)^{r-2}=1\) and \(\phi_r(s)=|s|^{(r-3)/2}\), with a noninteger exponent. 
It is \(\mathcal C^m\) at the origin iff \((r-3)/2>m\), equivalently \(r\ge2m+4\). 
If \(r\) is odd, then \((r-3)/2\) is an integer. 
When \(r=4q+1\), \(\phi_r(s)=s^{(r-3)/2}\), hence it is smooth. 
When \(r=4q+3\), \(\phi_r(s)=s|s|^{(r-5)/2}\), which is \(\mathcal C^{(r-5)/2}\) but not \(\mathcal C^{(r-3)/2}\). 
This yields the threshold \eqref{eq:Rm} and the exceptional smooth cases \(r=4q+1\).
\end{proof}
Now we introduce, in Theorem \ref{thm:in_out_feasibility}, the necessary and sufficient thrust
reproducibility conditions.

\begin{theorem}[Output-to-input correspondence under bounded inputs]
\label{thm:in_out_feasibility}
Consider system \eqref{eq:sys2} with \(u(t)\in\mathbb U\). 
Let \(y_d:\mathcal T\to\mathbb R\) be a desired output trajectory. 
{If \(y_d\) is not constant, define}
\[
\mathcal T_s:=\{t\in\mathcal T:\dot y_d(t)=0\},
\]
{and assume that every \(t_s\in\mathcal T_s\) is isolated and that \(y_d\) is sufficiently smooth at each \(t_s\) so that, whenever needed, \(r(t_s):=\min\{i>1:y_d^{(i)}(t_s)\neq0\}\) is finite.}

Then the following statements hold.

\begin{enumerate}
\item \textbf{Constant outputs.}
A constant output \(y_d(t)\equiv\bar y\) is reproducible iff \(y(t_i)=\bar y\) and \(\dot y(t_i)=0\). 
In this case \(x_2(t)\equiv0\), and any input \(u(t)\in\mathbb U\) can reproduce \(y_d\).

\item \textbf{Minimal regularity.}
Every output produced by \eqref{eq:sys2} with bounded input satisfies \(y\in\mathcal C^1(\mathcal T)\). 
Hence, any \(y_d\notin\mathcal C^1(\mathcal T)\) is not reproducible.

\item \textbf{No thrust reversal.}
{For a nonconstant trajectory with} \(\mathcal T_s=\emptyset\), any \(y_d\in\mathcal C^{m+2}(\mathcal T)\) is reproducible by a unique input \(u\in\mathcal C^m(\mathcal T,\mathbb U)\) iff \(y_d(t_i)=y(t_i)\), \(\dot y_d(t_i)=\dot y(t_i)\), and \((\dot y_d(t),\ddot y_d(t))\in\mathbb S\) for all \(t\in\mathcal T\).

\item \textbf{With thrust reversal.}
{For a nonconstant trajectory with} \(\mathcal T_s\neq\emptyset\), {the following hold.}
\begin{enumerate}
\item If \(r(t_s)=2\) for some \(t_s\in\mathcal T_s\), then \(y_d\) is not reproducible by any bounded input.

\item If \(r(t_s)\ge3\) for all \(t_s\in\mathcal T_s\), the feasibility condition holds on \(\mathcal T\setminus\mathcal T_s\), and the finite limiting values at the crossings belong to \(\mathbb U\), then bounded reproduction is possible. If \(r(t_s)=3\) at some crossing, every reproducing input is necessarily discontinuous there. 
At such a crossing, boundedness requires \(\sqrt{|y_d^{(3)}(t_s)|/2}\le\min\{\underline u,\overline u\}\), together with the feasibility condition \((\dot y_d(t),\ddot y_d(t))\in\mathbb S\) for all \(t\in\mathcal T\setminus\mathcal T_s\).

\item Let \(m\ge0\). 
A trajectory \(y_d\in\mathcal C^{m+2}(\mathcal T)\) is reproducible by a unique input \(u\in\mathcal C^m(\mathcal T,\mathbb U)\) iff the initial conditions match, \((\dot y_d(t),\ddot y_d(t))\in\mathbb S\) for all \(t\in\mathcal T\setminus\mathcal T_s\), and, at every \(t_s\in\mathcal T_s\), either \(r(t_s)\ge R_m\) or \(r(t_s)=4q+1\) for some integer \(q\ge1\). 
\end{enumerate}
\end{enumerate}
\end{theorem}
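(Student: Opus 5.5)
The proof rests on the inverse formula \eqref{eq:inv_u} away from reversals. At each reversal instant it uses a Taylor expansion of \(\dot y_d\), which reduces everything to the reversal monomials of Lemma~\ref{lem:reversal_monomial}.

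\textbf{Items 1--3.} For item~1, observe that \(y\equiv\bar y\) forces \(x_2=\dot y\equiv0\). In \eqref{eq:sys2} both the drift and the input gain vanish at \(x_2=0\), so \(x_2\equiv0\) is invariant for every \(u\in\mathbb U\). Conversely, the initial conditions are obviously necessary. For item~2, bounded \(u\) makes the right-hand side of \eqref{eq:sys2} locally bounded, so \(x_2\) is absolutely continuous. Then \(\dot y=x_2\) is continuous and \(y\in\mathcal C^1\).

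For item~3, the assumption \(\dot y_d\neq0\) on the compact set \(\mathcal T\) gives \(|\dot y_d|\ge c>0\). Then \eqref{eq:inv_u} defines \(u_d\). It is \(\mathcal C^m\), because \(\ddot y_d\in\mathcal C^m\) and \(s\mapsto|s|^{-1/2}\) is smooth away from \(0\). It lies in \(\mathbb U\) exactly when \((\dot y_d,\ddot y_d)\in\mathbb S\). Applying \(u_d\) with matching initial conditions reproduces \(y_d\): the state \((y_d,\dot y_d)\) solves \eqref{eq:sys2} under \(u_d\), and the vector field is locally Lipschitz where \(x_2\neq0\). Uniqueness of the input follows because the gain \(2\sqrt{|x_2|}\) is nonzero. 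Necessity follows by reading the same identity backwards.

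\textbf{Item 4: local expansion at a crossing.} Fix \(t_s\) and let \(r=r(t_s)\), \(s=t-t_s\). Taylor's theorem gives
\[
\dot y_d=\frac{y_d^{(r)}(t_s)}{(r-1)!}s^{r-1}(1+o(1)),\qquad
\ddot y_d=\frac{y_d^{(r)}(t_s)}{(r-2)!}s^{r-2}(1+o(1)).
\]
Substituting into \eqref{eq:inv_u}, the term \(k\dot y_d\to0\), and the leading term is
\[
\frac{\ddot y_d}{2\sqrt{|\dot y_d|}}\sim C_r\,\operatorname{sign}\!\big(y_d^{(r)}(t_s)\big)\,\phi_r(s),
\qquad
C_r=\frac{\sqrt{|y_d^{(r)}(t_s)|\,(r-1)!}}{2\,(r-2)!}.
\]
On \(\mathcal T\setminus\mathcal T_s\) any reproducing input must equal \(u_d\), since the gain is nonzero there. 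Hence the behaviour at \(t_s\) is dictated by \(\phi_r\) and Lemma~\ref{lem:reversal_monomial}.

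\textbf{Item 4(a), \(r=2\).} Here \(u_d\sim C_2|s|^{-1/2}\), which is unbounded. Values of \(u\) at the single point \(t_s\) cannot repair this, so no bounded input reproduces \(y_d\).

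\textbf{Item 4(b), \(r=3\).} The one-sided limits are \(\pm C_3\operatorname{sign}(y_d^{(3)}(t_s))\) with \(C_3=\sqrt{|y_d^{(3)}(t_s)|/2}\). Their signs differ, so every reproducing input jumps at \(t_s\). Both limits must lie in \(\mathbb U\), which gives \(C_3\le\min\{\underline u,\overline u\}\). For sufficiency with \(r\ge3\), take \(u_d\) on each interval between crossings, extended by its finite limits, together with any admissible value at the isolated points. Then verify that the Carathéodory solution coincides with \((y_d,\dot y_d)\).

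\textbf{Item 4(c).} This is the main obstacle, for two reasons.

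First, the asymptotic equivalence above is not enough for \(\mathcal C^m\) regularity. I need an exact factorization. I would write \(\dot y_d(t)=s^{r-1}h(s)\) with \(h(0)\neq0\); then \(\ddot y_d=s^{r-2}\big((r-1)h+sh'\big)\). Consequently \(u_d-k\dot y_d=\phi_r(s)\,G(s)\), where \(G=\big((r-1)h+sh'\big)/\big(2\sqrt{|h|}\big)\). The factor \(G\) is as smooth as \(y_d\) allows and satisfies \(G(0)\neq0\). Since \(G\) is \(\mathcal C^m\) with \(G(0)\neq0\), the product \(\phi_rG\) is \(\mathcal C^m\) at \(0\) iff \(\phi_r\) is. Lemma~\ref{lem:reversal_monomial} then gives sufficiency when \(r\ge R_m\) or \(r=4q+1\).

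Second, necessity requires the converse of the lemma: \(\phi_r\) fails to be \(\mathcal C^m\) in every other case. For even \(r\) this holds because \(|s|^{(r-3)/2}\) is not \(\mathcal C^m\) when \((r-3)/2<m\). For \(r=4q+3\) it holds because \(s|s|^{(r-5)/2}\) loses regularity at order \((r-3)/2\). I will also have to track carefully how much smoothness \(h\), and hence \(G\), retains, given that \(y_d\in\mathcal C^{m+2}\) and \(r\) may exceed \(m+2\). If needed, I would use the standing "sufficiently smooth at \(t_s\)" hypothesis for this. Uniqueness again follows from the nonvanishing gain off the isolated set \(\mathcal T_s\), together with continuity of \(u\) at the crossings.
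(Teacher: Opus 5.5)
Your proposal is correct and follows essentially the same route as the paper. Both use the unique inverse \eqref{eq:inv_u} away from crossings, a Taylor expansion of \(\dot y_d\) at each isolated \(t_s\), and a reduction to the reversal monomials \(\phi_r\) of Lemma~\ref{lem:reversal_monomial}. Your exact factorization \(u_d-k\dot y_d=\phi_r(s)G(s)\) with \(G(0)\neq0\), together with the explicit converse of the lemma, justifies the \(\mathcal C^m\) equivalence in item~4(c) more tightly than the paper's expansion with an \(o(|s|^{(r-3)/2})\) remainder, since that remainder alone does not control derivatives.
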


\begin{proof}
Item 1 follows directly from \(\dot y=x_2\). 
If \(y_d\equiv\bar y\), exact reproduction requires \(x_2\equiv0\), which is invariant for \eqref{eq:sys2} independently of \(u\). 
Thus the stated initial conditions are necessary and sufficient.

For item 2, the right-hand side of \eqref{eq:sys2} is continuous in \(x\) for every bounded \(u\). 
Hence \(x_2(\cdot)\) is continuous on \(\mathcal T\), and since \(\dot y=x_2\), every reproducible output belongs to \(\mathcal C^1(\mathcal T)\).

For item 3, if \(\dot y_d\neq0\) on \(\mathcal T\), the input channel in \eqref{eq:sys2} does not vanish along the desired trajectory. 
Solving the second equation of \eqref{eq:sys2} with \(x_2=\dot y_d\) gives the unique inverse \eqref{eq:inv_u}. 
The condition \(u(t)\in\mathbb U\) is exactly \((\dot y_d(t),\ddot y_d(t))\in\mathbb S\). 
Since \(y_d\in\mathcal C^{m+2}(\mathcal T)\) and \(\dot y_d\) is bounded away from zero, \eqref{eq:inv_u} belongs to \(\mathcal C^m(\mathcal T,\mathbb U)\).

For item 4, {the constant-output case has already been handled in item 1; hence the zeros of \(\dot y_d\) are isolated by assumption.}
Fix \(t_s\in\mathcal T_s\) and set \(s:=t-t_s\). 
On a punctured neighborhood of \(t_s\), the reproducing input is necessarily given by \eqref{eq:inv_u}. 
Since \(k\dot y_d(t_s)=0\), the limiting behavior is governed by \(\ddot y_d/(2\sqrt{|\dot y_d|})\). 
If \(r:=r(t_s)\), Taylor expansion gives \(\dot y_d(t)=a s^{r-1}+\mathcal O(s^r)\) and \(\ddot y_d(t)=(r-1)a s^{r-2}+\mathcal O(s^{r-1})\), with \(a:=y_d^{(r)}(t_s)/(r-1)!\neq0\). 
Therefore, on the punctured neighborhood,
\[
    u(t)=c\,\operatorname{sign}(s)^{r-2}|s|^{(r-3)/2}+o\!\left(|s|^{(r-3)/2}\right),
\]
where \(c\neq0\). 
Lemma~\ref{lem:reversal_monomial} then gives the boundedness, discontinuity, and \(\mathcal C^m\) classifications. 
For \(r=3\), the one-sided limits have magnitude \(\sqrt{|y_d^{(3)}(t_s)|/2}\), which yields the stated bound. 
The same inverse is unique away from \(\mathcal T_s\), and the \(\mathcal C^m\) conditions at the isolated crossings give uniqueness of the continuous extension when such an extension exists.
\end{proof}

\begin{remark}[Interpretation of Theorem~\ref{thm:in_out_feasibility}]
A generic thrust reversal corresponds to \(\dot y_d(t_s)=0\) and \(\ddot y_d(t_s)\neq0\), i.e., \(r(t_s)=2\). 
Theorem~\ref{thm:in_out_feasibility} states that such a reversal cannot be reproduced with bounded motor input. 
If the crossing is flatter, with \(r(t_s)=3\), bounded reproduction is possible only with an input jump. 
Continuous or smoother motor inputs require higher-order flatness at the reversal, as quantified by \(R_m\), with the precise parity-dependent cases stated in Lemma~\ref{lem:reversal_monomial}.
\end{remark}

%%%%%%%%%%%%%%%%%%%%%%%%%%%%%%%%%%%%%%%%
\section{Consequences for Motor Commands and Interaction References}
\label{sec:consequences}
%%%%%%%%%%%%%%%%%%%%%%%%%%%%%%%%%%%%%%%%

While Theorem~\ref{thm:in_out_feasibility} characterizes reproducibility at the normalized thrust level, we now translate this result to motor command references, and then discuss its implication for interaction-control references requiring thrust reversals.

\subsection{Motor current and voltage references}

Consider a DC motor with viscous friction and quadratic propeller drag,
\begin{align}
    J\dot{\omega}
        &=k_t i-B\omega-c_\tau\omega|\omega|,
        \label{eq:dc_mechanical}\\
    L_a\dot{i}
        &=V-R_ai-k_e\omega,
        \label{eq:dc_electrical}
\end{align}
where \(J>0\) is the motor-propeller inertia, \(B\ge0\) the viscous-friction coefficient, \(c_\tau\ge0\) the quadratic drag coefficient, \(k_t>0\) the motor torque constant, \(k_e>0\) the back-emf constant, \(R_a>0\) the armature resistance, and \(L_a>0\) the armature inductance. 
Let \(i\), \(V\), and \(\omega\) denote armature current, motor voltage, and rotor speed.

For a desired output \(y_d\), exact reproduction of the normalized thrust \(\dot y_d\) requires \(\omega_d=\operatorname{sign}(\dot y_d)\sqrt{|\dot y_d|}\). 
The corresponding current and voltage references are
\begin{align}
    i_d
        &=
        a_1\dot{\omega}_d
        +a_2\omega_d
        +a_3\omega_d|\omega_d|,
        \label{eq:current_reference}\\
    V_d
        &=
        b_1\ddot{\omega}_d
        +b_2\dot{\omega}_d
        +b_3\omega_d
        +b_4\omega_d|\omega_d|
        +b_5|\omega_d|\dot{\omega}_d,
        \label{eq:voltage_reference}
\end{align}
where \(a_1=J/k_t\), \(a_2=B/k_t\), \(a_3=c_\tau/k_t\), \(b_1=L_aJ/k_t\), \(b_2=(L_aB+R_aJ)/k_t\), \(b_3=k_e+R_aB/k_t\), \(b_4=R_ac_\tau/k_t\), and \(b_5=2L_ac_\tau/k_t\).

\begin{corollary}[Motor-command regularity]
\label{cor:dc_motor_references}
The current reference \(i_d\) has the same boundedness and regularity classification as the motor-torque input in Theorem~\ref{thm:in_out_feasibility}. 
The voltage reference \(V_d\) requires one additional time derivative: \(V_d\in\mathcal C^m\) iff the conditions of Theorem~\ref{thm:in_out_feasibility} hold with \(m\) replaced by \(m+1\). 
In particular, a continuous current reference requires zero-crossing order \(r\ge4\), while a continuous voltage reference requires zero-crossing order \(r\ge5\).
\end{corollary}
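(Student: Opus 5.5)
The plan is to expand \(i_d\) and \(V_d\) near each crossing \(t_s\) in terms of the local variable \(s:=t-t_s\), and then apply Lemma~\ref{lem:reversal_monomial} to the dominant singular monomial. Away from \(\mathcal T_s\), \(\omega_d=\operatorname{sign}(\dot y_d)\sqrt{|\dot y_d|}\) is as smooth as \(\dot y_d\). Hence \(i_d\) and \(V_d\) are \(\mathcal C^m\) there whenever \(y_d\in\mathcal C^{m+2}\), respectively \(\mathcal C^{m+3}\), and everything reduces to the local analysis at the isolated crossings.

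Fix a crossing with order \(r=r(t_s)\) and set \(a:=y_d^{(r)}(t_s)/(r-1)!\neq0\). Then \(\dot y_d=a s^{r-1}(1+\mathcal O(s))\), so
\[
\omega_d=\operatorname{sign}(a)\,\operatorname{sign}(s)^{r-1}|a|^{1/2}|s|^{(r-1)/2}\,(1+\mathcal O(s))^{1/2}.
\]
Write \(\omega_d=\sigma(s)|s|^{(r-1)/2}h(s)\), where \(h\) is smooth and nonvanishing and \(\sigma(s)=\pm\operatorname{sign}(s)^{r-1}\). Differentiating gives \(\dot\omega_d\sim c_1\operatorname{sign}(s)^{r-2}|s|^{(r-3)/2}=c_1\phi_r(s)\) and \(\ddot\omega_d\sim c_2\operatorname{sign}(s)^{r-3}|s|^{(r-5)/2}\), which is \(c_2\phi_{r-1}\) up to the shift of one in the order. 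I will make the shift precise by noting that \(\ddot\omega_d\) has the monomial structure \(\operatorname{sign}(s)^{(r-1)-2}|s|^{((r-1)-3)/2}\).

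For the current, the term \(a_1\dot\omega_d\) coincides, up to the positive constant \(a_1 k_t\) relation, with the torque input \(u\) of \eqref{eq:inv_u} expressed in \(\omega\)-coordinates. Indeed \(u=\dot\omega_d+k\omega_d|\omega_d|\). The remaining terms \(a_2\omega_d\) and \(a_3\omega_d|\omega_d|=a_3\dot y_d\) are strictly more regular: they behave like \(\phi_{r+2}\) and like a \(\mathcal C^{m+1}\) function, respectively. Therefore \(i_d\) inherits exactly the classification of \(u\) in Theorem~\ref{thm:in_out_feasibility}, including unboundedness for \(r=2\), the jump for \(r=3\), and the thresholds \(R_m\) and \(4q+1\).

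For the voltage, the dominant term is \(b_1\ddot\omega_d\). Among the other terms, \(b_2\dot\omega_d\) is one derivative smoother, \(b_3\omega_d\) and \(b_4\dot y_d\) smoother still, and \(b_5|\omega_d|\dot\omega_d\sim|s|^{(r-1)/2}|s|^{(r-3)/2}\) is of order \(|s|^{r-2}\) times a sign factor, hence also smoother. Consequently \(V_d\in\mathcal C^m\) iff \(\dot\omega_d\in\mathcal C^{m+1}\), that is, iff \(u\in\mathcal C^{m+1}\). This is the statement with \(m\) replaced by \(m+1\). The continuity claims then follow from Lemma~\ref{lem:reversal_monomial}: \(R_0=4\) gives the current threshold, and for \(m=1\) we have \(R_1=5=4\cdot1+1\), so \(r\ge5\) suffices.

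The main obstacle is making rigorous that the lower-order remainders cannot cancel or worsen the leading monomial. The remainders must be shown to be genuinely \(\mathcal C^{m}\), respectively \(\mathcal C^{m+1}\), and not merely \(o\) of the leading term. I would handle this by writing each quantity as \(\phi_r(s)\) times a function that is \(\mathcal C^m\) in \(s\) and nonvanishing, exploiting that \(h\) is \(\mathcal C^{m}\). I would also check the borderline parities: \(r=4\), where continuity holds for the current but must fail for the voltage, and the exceptional cases \(r=4q+1\) after the shift.
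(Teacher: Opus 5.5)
Your overall route is the paper's: $i_d$ equals $(J/k_t)u$ plus the smoother term $a_2\omega_d$, and $V_d=L_a\dot i_d+R_ai_d+k_e\omega_d$ costs one more derivative. Your treatment of the current is fine, but the voltage step has two genuine defects.

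\textbf{First defect: $\ddot\omega_d$ is not of the form $\phi_{r-1}$.} Your own display has exponent $(r-5)/2$, while $\phi_{r-1}$ has exponent $(r-4)/2$. Differentiation lowers the exponent of $|s|$ by one. Inside the $\phi$-family this means $r\mapsto r-2$ plus an extra factor $\operatorname{sign}(s)$: for $r\ge4$, $\phi_r'=\tfrac{r-3}{2}\operatorname{sign}(s)\,\phi_{r-2}$. So no re-indexing of $r$ is correct. Applying Lemma~\ref{lem:reversal_monomial} to $\phi_{r-1}$ at level $m$, as you announce, yields ``$r\ge R_m+1$ or $r=4q+2$'', which is false for every $m\ge1$. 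At $r=5$ it denies $V_d\in\mathcal C^1$, although $\omega_d\sim s^2$ and $V_d$ is smooth for smooth $y_d$. At $r=6$ it predicts a smooth $V_d$, although $\ddot\omega_d\sim c\,\operatorname{sign}(s)|s|^{1/2}$ is only continuous.

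The shift must act on $m$, not on $r$. For $r\ge4$, $\dot\omega_d$ is continuous at $t_s$. By the mean value theorem, $\ddot\omega_d$ then extends to a $\mathcal C^j$ function iff $\dot\omega_d$ extends to a $\mathcal C^{j+1}$ one. Bootstrap this through $V_d=b_1\ddot\omega_d+b_2\dot\omega_d+b_3\omega_d+b_4\dot y_d+\tfrac{b_5}{2}\ddot y_d$; note that $|\omega_d|\dot\omega_d=\ddot y_d/2$ exactly, so that term needs no asymptotics. This gives $V_d\in\mathcal C^m$ iff $\dot\omega_d\in\mathcal C^{m+1}$, and Lemma~\ref{lem:reversal_monomial} is then applied to $\phi_r$ at level $m+1$. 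That is what produces $R_{m+1}$ and keeps the exceptional orders at $r=4q+1$.

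\textbf{Second defect: the dominance of $b_1\ddot\omega_d$ breaks down at $r=3$.} Its leading coefficient is $\tfrac{r-3}{2}c_1=0$, so $\ddot\omega_d$ is merely bounded and of the same size as $b_2\dot\omega_d$. The formula for $V_d$ evaluated off $t_s$ can then even be continuous. Take $\dot y_d=s^2(a+\beta s)$ with $a>0$. Then $\dot\omega_d\to\pm\sqrt a$ and $\ddot\omega_d\to\pm\beta/\sqrt a$ as $s\to0^\pm$, while $\omega_d,\dot y_d,\ddot y_d\to0$. Choosing $\beta=-b_2a/b_1$ makes both one-sided limits of $V_d$ vanish. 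Hence no punctured-neighborhood argument can show $V_d\notin\mathcal C^0$ at $r=3$.

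What is needed there is the global reading behind the paper's one-line proof. The current $i_d$ jumps at $t_s$, and since $L_a>0$ the electrical equation turns any bounded voltage into a continuous current. So the voltage realizing $i_d$ must contain an impulse of weight $L_a$ times that jump; this is the ``not classically bounded'' entry of Table~\ref{tab:practical_rules}. Use this argument for $r=3$, and reserve your local analysis for $r=2$ and $r\ge4$.

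Two minor points:
\begin{itemize}
\item $\omega_d$ behaves like $\operatorname{sign}(s)\phi_{r+2}$, not $\phi_{r+2}$. This is harmless, since you only need $\omega_d$ to be one degree smoother than $\dot\omega_d$.
\item Writing $\dot\omega_d=\phi_r h$ with $h\in\mathcal C^{m}$ (resp.\ $\mathcal C^{m+1}$) requires $y_d\in\mathcal C^{m+r}$ (resp.\ $\mathcal C^{m+r+1}$) near $t_s$. That is more than the $\mathcal C^{m+2}$ (resp.\ $\mathcal C^{m+3}$) you assume, so state it explicitly; the paper leaves it implicit as well.
\end{itemize}
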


\begin{table*}[t]
\centering
\caption{Practical interpretation of thrust zero-crossing order. 
Here \(x_{2d}:=\dot y_d\) is the desired normalized thrust and \(x_{2d}(t)=\alpha(t-t_s)^p+o(|t-t_s|^p)\), with \(\alpha\neq0\), near an isolated zero \(t_s\). 
The theorem order is \(r=p+1\). 
Odd \(p\) corresponds to a sign-changing thrust reversal; even \(p\) corresponds to touching zero without changing thrust direction.}
\label{tab:practical_rules}
\resizebox{\textwidth}{!}{%
\begin{tabular}{p{2.4cm} p{1cm} p{1cm} p{1.3cm} p{1.3cm} p{1.3cm} p{5.8cm}}
\toprule
Desired thrust near zero & Thrust order \(p\) & Theorem order \(r\)  & Motor  torque \(u\)& Current \(i_d\) & Voltage \(V_d\) & Practical meaning\\
\midrule
\(x_{2d}\sim (t-t_s)\)
& \(1\)
& \(2\)
& unbounded
& unbounded
& unbounded
& Linear sign-changing thrust reversal. This includes ordinary sinusoidal crossings. Exact reproduction is impossible with bounded motor commands.\\

\addlinespace[2pt]
\(x_{2d}\sim (t-t_s)^2\)
& \(2\)
& \(3\)
& bounded, discontinuous
& bounded, discontinuous
& not classically bounded
& The thrust reaches zero and returns with the same sign. It avoids torque blow-up, but the ideal torque/current is still discontinuous, so this is unsuitable when smooth commands are required.\\

\addlinespace[2pt]
\(x_{2d}\sim (t-t_s)^3\)
& \(3\)
& \(4\)
& continuous
& continuous
& unbounded
& Minimum sign-changing reversal order for continuous torque and current. It is still insufficient for bounded voltage feedforward.\\

\addlinespace[2pt]
\(x_{2d}\sim (t-t_s)^4\)
& \(4\)
& \(5\)
& regular
& regular
& continuous
& The thrust reaches zero and returns with the same sign. This case is useful as a regularity benchmark and shows that zero thrust can be passed smoothly if no sign change is required.\\

\addlinespace[2pt]
\(x_{2d}\sim (t-t_s)^5\)
& \(5\)
& \(6\)
& at least \(\mathcal C^1\)
& at least \(\mathcal C^1\)
& continuous
& Required minimum sign-changing reversal order for continuous voltage. This is the first odd-order crossing that avoids the voltage singularity.\\

\addlinespace[2pt]
\(x_{2d}\sim (t-t_s)^7\)
& \(7\)
& \(8\)
& at least \(\mathcal C^2\)
& at least \(\mathcal C^2\)
& at least \(\mathcal C^1\)
& Higher-order sign-changing thrust reversal. Provides additional smoothness margin and reduces localized reversal stress.\\

\addlinespace[2pt]
\(x_{2d}=\) \phantom{rrrtttttrrrttttttt} ~\(A\sin^{2q+1}(\Omega t+\phi)\)
& \(2q+1\)
& \(2q+2\)
& improves with \(q\)
& improves with \(q\)
& improves with \(q\)
& Simple periodic family for implementation. \(q=0\) is the unreproducible case; \(q=1\) gives continuous torque/current, whereas \(q\ge2\) gives continuous voltage.\\
\bottomrule
\end{tabular}%
}
\end{table*}

\begin{proof}
Solving \eqref{eq:dc_mechanical} for the current gives \(i_d=(J/k_t)\dot{\omega}_d+(B/k_t)\omega_d+(c_\tau/k_t)\omega_d|\omega_d|\), which is \eqref{eq:current_reference}. 
Using \(d(\omega_d|\omega_d|)/dt=2|\omega_d|\dot{\omega}_d\) and substituting \(\dot i_d\) into \(V_d=L_a\dot i_d+R_ai_d+k_e\omega_d\) gives \eqref{eq:voltage_reference}. 
Since \(L_a>0\), the voltage reference contains one additional derivative of the current reference, yielding the stated regularity shift.
\end{proof}

Combining Theorem~\ref{thm:in_out_feasibility} and Corollary~\ref{cor:dc_motor_references} gives a practical rule in terms of the desired normalized thrust. 
Let \(x_{2d}:=\dot y_d\), and suppose that near an isolated zero \(t_s\), \(x_{2d}(t)=\alpha(t-t_s)^p+o(|t-t_s|^p)\), with \(\alpha\neq0\). 
Then \(p\) is the zero-crossing order of the desired thrust and the theorem order is \(r=p+1\). 
Table~\ref{tab:practical_rules} summarizes the corresponding motor-command implications. 
Odd \(p\) gives a sign-changing thrust reversal, while even \(p\) only brings the thrust to zero and back without reversing its direction.

\subsubsection{Practical reference generation}

Table~\ref{tab:practical_rules} can be used as a reference-shaping rule. 
If a thrust reversal is expected at \(t_s\), the desired normalized thrust \(x_{2d}\) should be made locally flat enough at zero. 
A simple local construction is \(x_{2d}(t)=\alpha(t-t_s)^p\chi(t)\), where \(\alpha\neq0\), \(p\) is selected according to the desired motor-command regularity in Table~\ref{tab:practical_rules}, and \(\chi\) is a smooth blending function used to connect the local reversal profile to the rest of the trajectory. 

For periodic or oscillatory thrust references, a convenient family is
\begin{equation}
    x_{2d}(t)=A\sin^{2q+1}(\Omega t+\phi),
    \label{eq:sin_family}
\end{equation}
where \(A\), \(\Omega\), and \(\phi\) set the amplitude, frequency, and phase. 
The case \(q=0\) gives sinusoidal crossing and corresponds to the unreproducible linear reversal. 
The case \(q=1\), i.e., \(x_{2d}=A\sin^3(\Omega t+\phi)\), is the minimum sign-changing reversal order for continuous torque and current, but it does not remove the voltage singularity. 
Choosing \(q\ge2\), for instance \(x_{2d}=A\sin^5(\Omega t+\phi)\), gives a quintic or higher-order reversal and is the practical minimum for continuous voltage references. 
These constructions do not increase the physical actuator limits; they shape the reference so that the ideal inverse no longer demands singular motor commands at thrust reversal. 
Thus, a simple rule of thumb is: avoid linear thrust crossings, use at least cubic crossings for continuous torque/current, and use at least quintic crossings when voltage continuity is required.

This also shows why thrust-space bounds alone are insufficient. A reference can be bounded, and even rate-limited, in the thrust coordinate while still requiring singular motor commands after conversion to rotor speed. The relevant design quantity at reversal is the zero-crossing order \(p\), in Table~\ref{tab:practical_rules}, not only the amplitude or slope away from zero.

\subsubsection{Ideal motor-command profiles}

The motor-command implications of Table~\ref{tab:practical_rules} are visualized in Fig.~\ref{fig:dc_reference_cases}. 
The figure reports the ideal voltage, current, speed, and thrust references obtained from \eqref{eq:current_reference}--\eqref{eq:voltage_reference} for six representative zero-crossing orders. 
The linear case corresponds to \(p=1\), hence \(r=2\), and produces singular ideal current and voltage references at reversal. 
Increasing the crossing order flattens the desired thrust near zero and progressively regularizes the required motor commands. 
These profiles are ideal feedforward references: in a real implementation, saturation and sampling replace singular commands with finite peaks and tracking errors, as shown experimentally in Section~\ref{sec:exp}.

\begin{figure*}[t]
    \centering
    \begin{subfigure}{0.49\textwidth}
        \centering
        \includegraphics[width=1.05\linewidth]{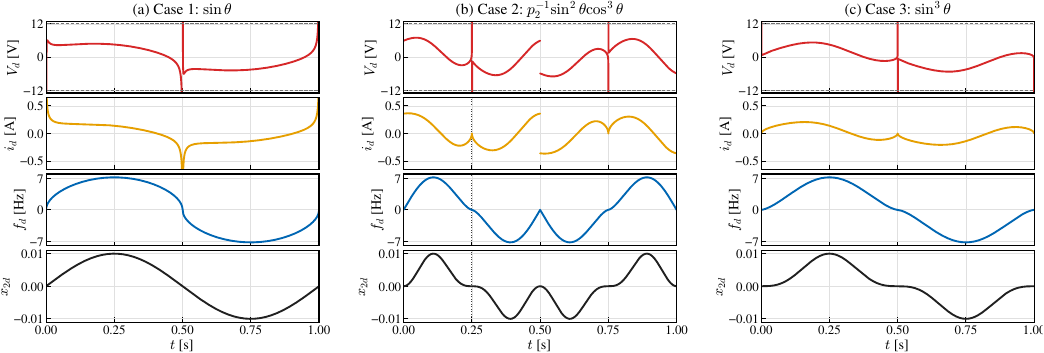}
        \caption{Cases~1--3.}
        \label{fig:dc_cases_1_3}
    \end{subfigure}
    \hfill
    \begin{subfigure}{0.49\textwidth}
        \centering
        \includegraphics[width=1.05\linewidth]{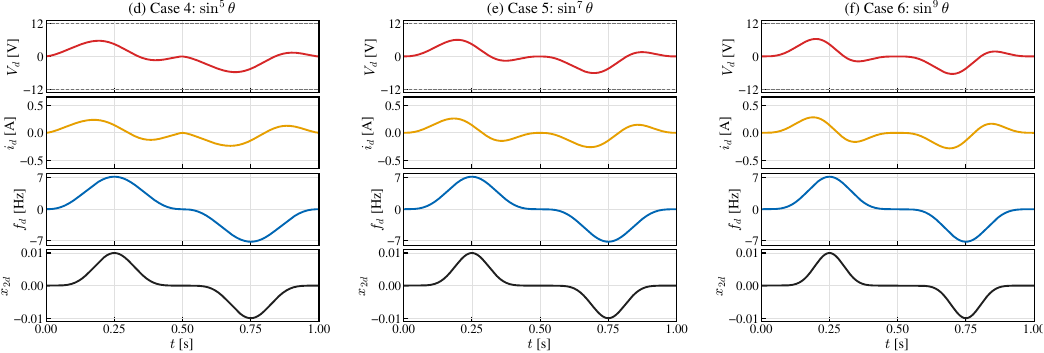}
        \caption{Cases~4--6.}
        \label{fig:dc_cases_4_6}
    \end{subfigure}
    \caption{Ideal voltage, current, speed, and thrust references for six zero-crossing orders. 
    Linear thrust reversal requires singular ideal motor commands, whereas flatter reversals progressively improve boundedness and smoothness.}
    \label{fig:dc_reference_cases}
\end{figure*}

\subsection{Interaction-control references}

The same obstruction appears when a high-level interaction controller commands a desired thrust or acceleration that changes sign. 
To make this point explicit, consider the normalized interaction channel
\begin{equation}
\begin{aligned}
    \dot{x}_0 &= x_1,\\
    \dot{x}_1 &= x_2+f_e(t),\\
    \dot{x}_2 &=
        -2k\,\operatorname{sign}(x_2)|x_2|^{3/2}
        +2\sqrt{|x_2|}\,u,
\end{aligned}
\label{eq:motivating_plant}
\end{equation}
where \(x_0\) is position, \(x_1\) is velocity, \(x_2\) is normalized thrust/acceleration, and \(f_e(t)\) is an external force. 
Let \(x_{2d}\) be the thrust reference generated by an impedance, admittance, or force-tracking controller. In any case, exact tracking \(x_2(t)\equiv x_{2d}(t)\) requires, whenever \(x_{2d}(t)\neq0\),
\begin{equation}
    u(t)=k x_{2d}(t)+\frac{\dot{x}_{2d}(t)}{2\sqrt{|x_{2d}(t)|}}.
    \label{eq:tracking_u_motiv}
\end{equation}
Therefore, if \(x_{2d}(t_s)=0\) and \(\dot x_{2d}(t_s)\neq0\), the required motor input is unbounded as \(t\to t_s\). 
This is exactly the \(r(t_s)=2\) case of Theorem~\ref{thm:in_out_feasibility}.

As an illustration, suppose the mass is commanded to follow \(x_r(t)=0.35t\) under the external force \(f_e(t)=0.6\sin(2\pi t+5\pi/2)\). 
An impedance-type thrust reference can be written as
\begin{equation}
x_{2d,\mathrm{imp}}
=
\ddot{x}_r
-\frac{B_I}{M_I}(x_1-\dot{x}_r)
-\frac{K_I}{M_I}(x_0-x_r)
+\left(\frac{1}{M_I}-1\right)f_e,
\label{eq:motivating_imp}
\end{equation}
with \(M_I=0.65\), \(B_I=3.5\), and \(K_I=5.0\) whereas an admittance-type reference first generates a compliant trajectory \(x_a\) through
\begin{equation}
\ddot{x}_a
=
\ddot{x}_r
+
\frac{1}{M_A}\bigl(
f_e
-
B_A(\dot{x}_a-\dot{x}_r)
-
K_A(x_a-x_r)
\bigr),
\label{eq:motivating_adm_model}
\end{equation}
and then commands
\begin{equation}
x_{2d,\mathrm{adm}}
=
\ddot{x}_a
-f_e
-k_v(x_1-\dot{x}_a)
-k_p(x_0-x_a),
\label{eq:motivating_adm}
\end{equation}
with \(M_A=0.60\), \(B_A=3.0\), \(K_A=4.5\), \(k_p=7.0\), and \(k_v=5.5\). These references cross zero linearly, hence \eqref{eq:tracking_u_motiv} predicts the singular motor input shown in Fig.~\ref{fig:admit_exa}. 
The point is not that impedance or admittance control is invalid; rather, thrust references generated during interaction must be made actuator-aware when implemented with bidirectional propellers.

\begin{figure}[t]
    \centering
    \includegraphics[width=0.9\columnwidth]{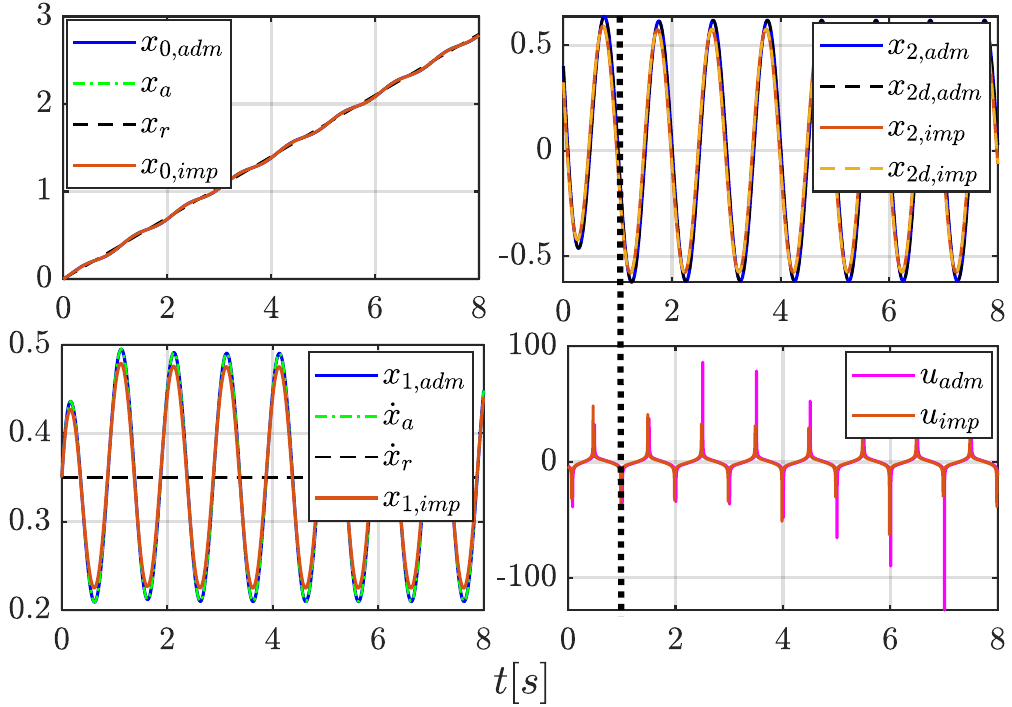}
\caption{Impedance and admittance examples on \eqref{eq:motivating_plant}. 
The ideal motor input becomes singular at instants where the desired thrust crosses zero with nonzero slope during interaction.}
    \label{fig:admit_exa}
\end{figure}

%%%%%%%%%%%%%%%%%%%%%%%%%%%%%%%%%%%%%%%%%
\section{Actuator-Level Experimental Validation}
\label{sec:exp}
%%%%%%%%%%%%%%%%%%%%%%%%%%%%%%%%%%%%%%%%%

\begin{figure*}[t]
    \centering
    \resizebox{\textwidth}{!}{%
    \begin{tikzpicture}[
        line cap=round,
        line join=round,
        >=Latex,
        font=\scriptsize,
        photo/.style={draw=black, line width=0.8pt, fill=gray!8},
        tag/.style={draw=black, rounded corners=2pt, line width=0.6pt, fill=white, align=center, inner sep=3pt},
        block/.style={draw=black, rounded corners=2pt, line width=0.7pt, fill=gray!5, align=center, inner sep=4pt},
        act/.style={draw=black, rounded corners=2pt, line width=0.7pt, fill=orange!8, align=center, inner sep=4pt},
        sens/.style={draw=black, rounded corners=2pt, line width=0.7pt, fill=green!6, align=center, inner sep=4pt},
        ctrl/.style={draw=black, rounded corners=2pt, line width=0.7pt, fill=blue!5, align=center, inner sep=4pt},
        sig/.style={black, line width=0.85pt, -{Latex[length=2.0mm,width=1.4mm]}},
        power/.style={red!70!black, line width=0.9pt, -{Latex[length=2.0mm,width=1.4mm]}},
        meas/.style={green!45!black, line width=0.9pt, -{Latex[length=2.0mm,width=1.4mm]}},
        callout/.style={black!65, line width=0.6pt, -{Latex[length=1.6mm,width=1.1mm]}},
        group/.style={draw=black!35, dashed, rounded corners=3pt, line width=0.6pt},
        lab/.style={font=\scriptsize, align=center},
        photoCallout/.style={
    Magenta!90!black,
    line width=1.25pt,
    -{Latex[length=1.6mm,width=1.7mm]},
    shorten >=1pt,
    shorten <=1pt
},
    ]

    % =====================================================
    % User-adjustable scale for the left setup-photo panel
    % =====================================================
    \def\setupPanelScale{0.99} % 1.00 original, 0.90 smaller, 0.80 much smaller
    \pgfmathsetmacro{\rightPanelX}{7.3*\setupPanelScale + 1.6}

    % Bounding box, automatically adjusted to the right-panel shift
    \path[use as bounding box] (-0.15,0.25) rectangle ({\rightPanelX+8.35},-5.95);

% =====================================================
% Left panel: setup photo with callouts
% =====================================================
\begin{scope}[scale=\setupPanelScale, transform shape]

    % Photo panel
    \node[photo, minimum width=7.3cm, minimum height=4.8cm, anchor=north west] (photo) at (0,0) {};

    % Actual setup photo
    \node[anchor=center] at (photo.center)
    {\includegraphics[width=7.3cm,height=4.8cm,keepaspectratio]{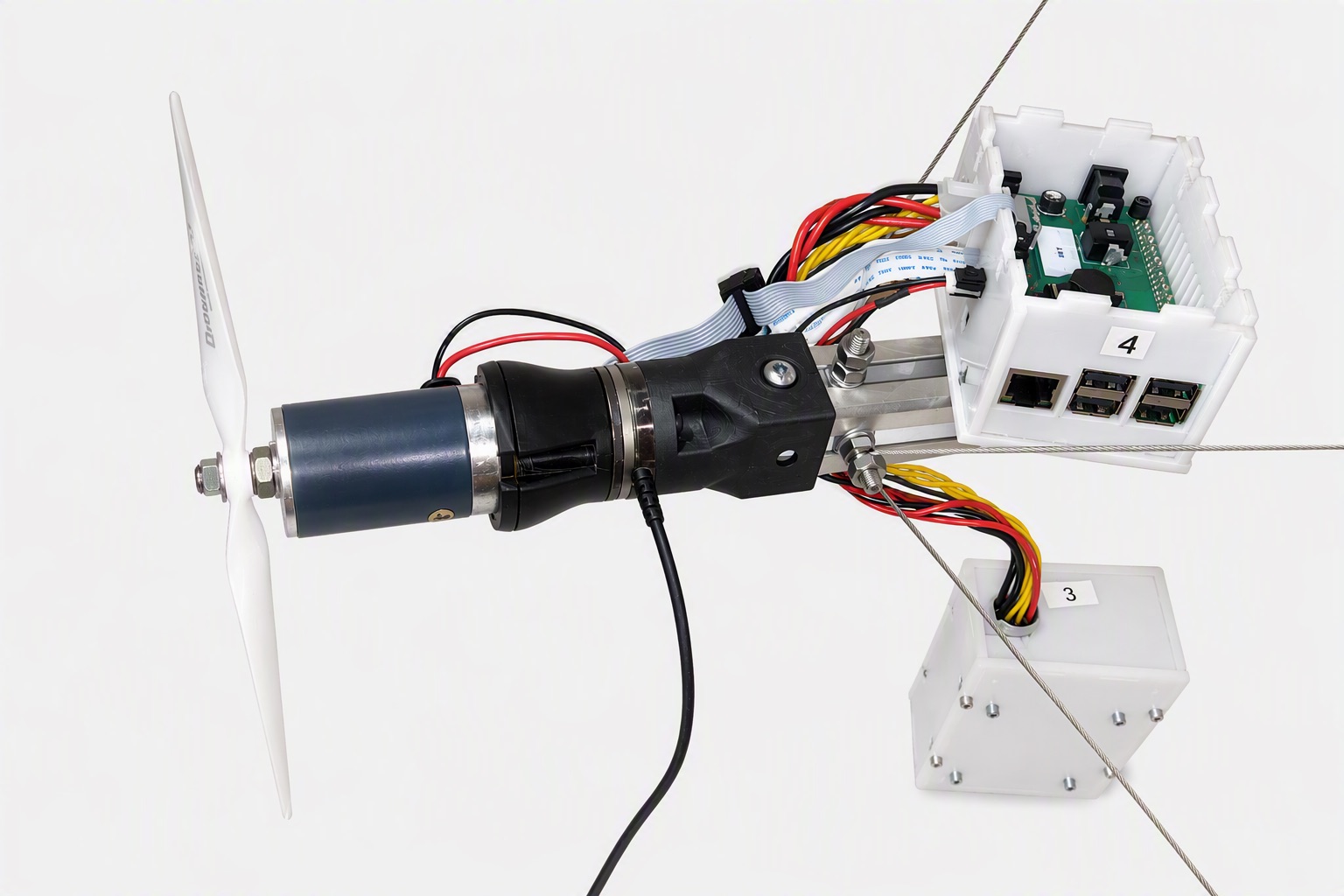}};

    % -----------------------------------------------------
    % Callout tags: three above, three below
    % Coordinates are in the 7.3 cm x 4.8 cm photo frame
    % -----------------------------------------------------

    % Top row labels
    \node[tag, anchor=south] (p_tag)  at (2.3, -0.9) {bidirectional\\propeller};
    \node[tag, anchor=south] (m_tag)  at (4, -0.9) {DC motor};
    \node[tag, anchor=south] (el_tag) at (6.4, -0.9) {electronics\\box};

    % Bottom row labels
    \node[tag, anchor=north] (enc_tag) at (2.5,-3.92) {rotor\\encoder};
    \node[tag, anchor=north] (ft_tag)  at (4.25,-3.92) {force--torque\\sensor / fixture};
    \node[tag, anchor=north] (cur_tag) at (6.25,-4.12) {power box};

% -----------------------------------------------------
% Callout arrows: short, local, nonintersecting
% -----------------------------------------------------

% Top row arrows
\draw[photoCallout] (p_tag.south)  -- (1.4,-1.8);  % propeller blade/hub region
\draw[photoCallout] (m_tag.south)  -- (1.9,-2.4);    % motor cylinder
\draw[photoCallout] (el_tag.south) -- (6.05,-1.45); % electronics box

% Bottom row arrows
\draw[photoCallout] (enc_tag.north) -- (3.05,-2.40); % rear motor / encoder region
\draw[photoCallout] (ft_tag.north)  -- (3.45,-2.40); % black central fixture / sensor mount
\draw[photoCallout] (cur_tag.north) -- (5.60,-3.65); % power box / wiring region

    \node[font=\small\bfseries, anchor=north west] at (0,-5.02)
        {(a) Experimental motor--propeller setup};

\end{scope}

    % =====================================================
    % Right panel: clean signal diagram
    % =====================================================
    \begin{scope}[shift={(\rightPanelX,0)}]

        % Main actuation blocks
        \node[ctrl, minimum width=1.15cm, minimum height=0.65cm] (controller) at (0.0,-0.75) {controller/\\logger};
        \node[act, minimum width=1.05cm, minimum height=0.65cm] (driver) at (2.25,-0.75) {motor\\driver};
        \node[act, minimum width=1.05cm, minimum height=0.65cm] (motor) at (4.70,-0.75) {DC\\motor};
        \node[act, minimum width=1.60cm, minimum height=0.65cm] (propeller) at (6.95,-0.75) {bidirectional\\propeller};

        % Measurement blocks
        \node[sens, minimum width=1.65cm, minimum height=0.65cm] (current) at (2.25,-2.35) {current\\monitor};
        \node[sens, minimum width=1.65cm, minimum height=0.65cm] (encoder) at (4.70,-2.35) {encoder};
        \node[sens, minimum width=1.80cm, minimum height=0.65cm] (force) at (6.95,-2.35) {force--torque\\sensor};
        \node[sens, minimum width=1.25cm, minimum height=0.55cm] (adc) at (0.0,-2.35) {ADC};

        % Actuation chain
        \draw[sig] (controller.east) -- node[lab, above] {command} (driver.west);
        \draw[power] (driver.east) -- node[lab, above] {voltage} (motor.west);
        \draw[sig] (motor.east) -- node[lab, above] {speed} (propeller.west);

        % Measurement arrows, clean downward mapping
        \draw[meas] (driver.south) -- node[lab, left] {current} (current.north);
        \draw[meas] (motor.south) -- node[lab, left] {position} (encoder.north);
        \draw[meas] (propeller.south) -- node[lab, right] {thrust} (force.north);

        % Measurement paths back to controller
        \coordinate (bus) at (0.0,-3.35);

        \draw[meas] (current.west) -- (adc.east);
        \draw[meas] (adc.north) -- (controller.south);

        \draw[meas] (encoder.south) |- (bus) -- (adc.south);
        \draw[meas] (force.south) |- (bus);

        % Group boxes
        \node[group, fit=(driver)(motor)(propeller), inner sep=7pt,
              label={[font=\scriptsize]above:actuation side}] {};
        \node[group, fit=(adc)(current)(encoder)(force), inner sep=7pt,
              label={[font=\scriptsize]below:measurement side}] {};

        \node[font=\small\bfseries, anchor=north west] at (-0.95,-3.85)
            {(b) Signals used in the validation};

    \end{scope}
    \end{tikzpicture}%
    }
\caption{Actuator-level setup and signal flow. 
The DC motor drives the bidirectional propeller, while rotor position, motor current, applied voltage, and axial thrust are recorded around zero-thrust crossings. 
The controller/logger computes the voltage command from rotor-position, rotor-speed, and thrust errors; current and voltage are recorded for validation.}
    \label{fig:experimental_setup}
\end{figure*}

\begin{figure*}[t]
    \centering
        \includegraphics[width=1\linewidth]{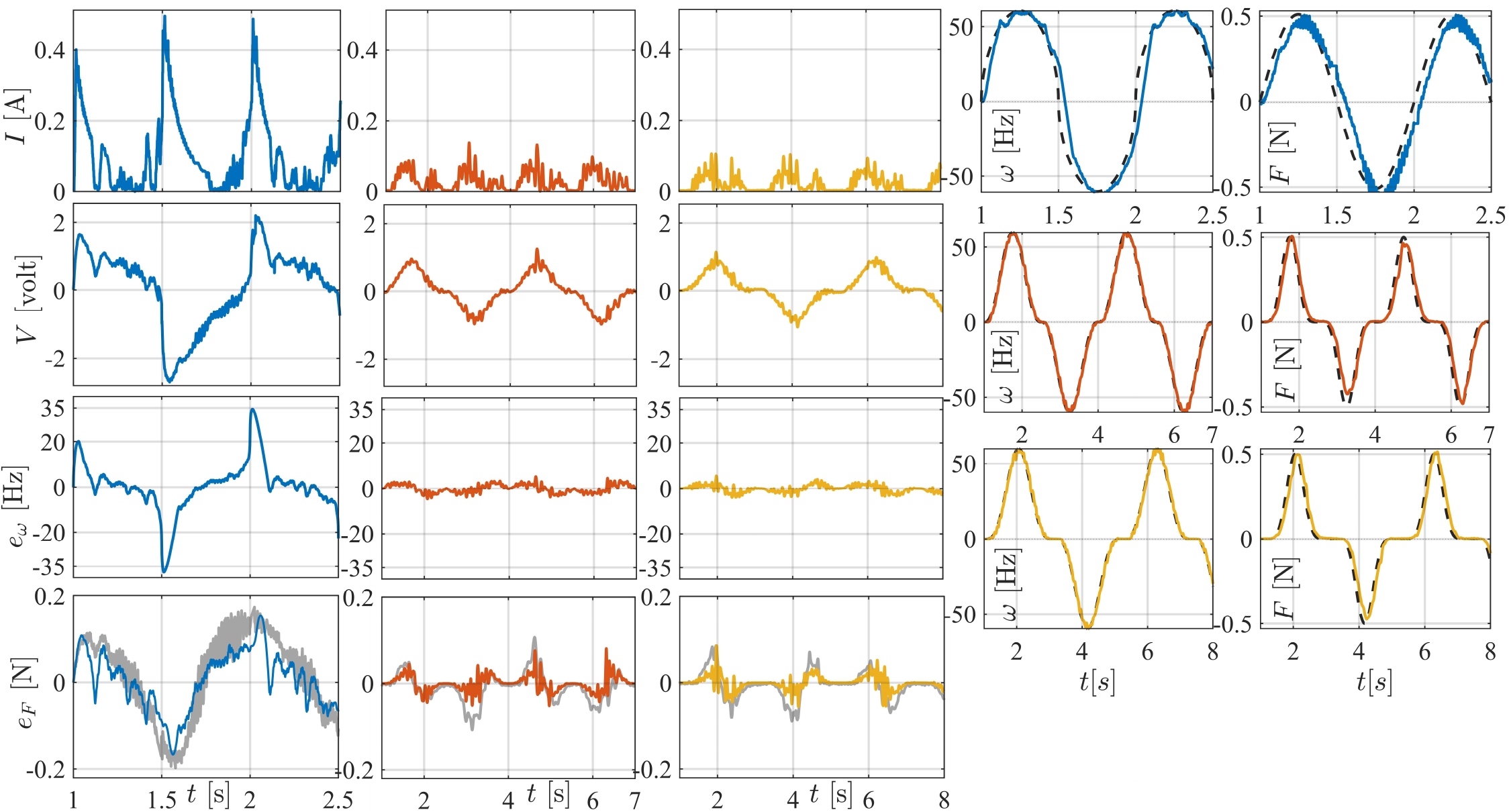}
\caption{Experimental comparison of three thrust-reversal profiles. 
Left block: motor current \(I\), applied voltage \(V\), speed error \(e_\omega\), and thrust-force error \(e_F\) for Case~1 (blue), Case~4 (red), and Case~5 (yellow). 
Right block: speed \(\omega\) and thrust \(F\) tracking, with dashed curves denoting references and gray traces denoting force-sensor measurements where shown. 
The linear reversal in Case~1 produces localized voltage/current peaks and large tracking errors near zero thrust, whereas Cases~4 and~5 flatten the thrust profile at zero and avoid this degradation.}
    \label{fig:dc_exp_cases}
\end{figure*}

We validate the reversal-order predictions on the actuator-level setup shown in Fig.~\ref{fig:experimental_setup}. 
The experiment isolates the motor-propeller mechanism through which thrust reversals demand bounded or unbounded motor commands. 
Following Table~\ref{tab:practical_rules}, Fig.~\ref{fig:dc_exp_cases} reports the experimental results comparing one linear reversal, with \(p=1\), against two higher-order sign-changing reversals, with \(p=5\) and \(p=7\).

The setup uses a \(12\,\mathrm{V}\) Zhengke ZYTD-38S-R brushed DC motor in the \(2000\,\mathrm{r\,min^{-1}}\) configuration, driving an \(8\,\mathrm{g}\), \(0.20\,\mathrm{m}\)-diameter, \(0.025\,\mathrm{m}\)-wide propeller with inertia \(J_p=2.71\times10^{-5}\,\mathrm{kg\,m^2}\). 
The identified parameters are \(B=5.31\times10^{-5}\,\mathrm{N\,m\,s\,rad^{-1}}\), \(c_f=1.4\times10^{-4}\,\mathrm{N\,Hz^{-2}}\), and \(c_\tau=2.2\times10^{-6}\,\mathrm{N\,m\,Hz^{-2}}\). 
Rotor position \(\theta\) is measured by an AS5048B magnetic encoder, rotor speed \(\omega\) is estimated by a linear observer at \(T_s=1\,\mathrm{ms}\), axial thrust is measured by an ATI Mini40 force-torque sensor, and motor current is digitized through an AD7091R ADC.

The desired rotor position is obtained from \(\theta_d(t)=\int_0^t\omega_d(\tau)d\tau\), with \(\omega_d=\operatorname{sgn}(x_{2d})\sqrt{|x_{2d}|/c_f}\). 
The voltage command is
\(
u=K_pe_\theta+K_i\int e_\theta dt+K_de_\omega+K_{pF}e_F+K_{iF}\int e_Fdt,
\)
where \(e_\theta=\theta_d-\theta\), \(e_\omega=\omega_d-\hat\omega\), and \(e_F=F_d-F\). 
The gains are \(K_p=190\), \(K_i=5\), \(K_d=7\), \(K_{pF}=500\), and \(K_{iF}=5\). 
A \(3\,\mathrm{ms}\) low-pass filter is applied to \(e_\omega\), a \(0.02\,\mathrm{s}\) filter to \(e_F\), and the remaining measurements are filtered at \(15\,\mathrm{Hz}\). 
Anti-windup, one-count quantization, and a \(\pm2048\)-count command range are used, giving \(5.86\,\mathrm{mV}\) per count and a voltage limit of \(\pm12\,\mathrm{V}\).

The three experiments use \(x_{2d}=A\sin(s_1)\), \(x_{2d}=A\sin^5(s_2)\), and \(x_{2d}=A\sin^7(s_3)\), corresponding to Cases~1, 4, and~5. 
Different periods are used to compare the cases without feedforward compensation. 
Since Case~1 admits no bounded ideal voltage feedforward at reversal, the periods of Cases~4 and~5 are selected to keep the nonsingular portions of the rotor-speed profiles comparable to Case~1. 
Thus, the comparison emphasizes zero-crossing order rather than maneuver speed alone.

As shown in Fig.~\ref{fig:dc_exp_cases}, Case~1 has \(p=1\) and \(r=2\), so exact reproduction would require an unbounded ideal motor input. 
Cases~4 and~5 have \(p=5\) and \(p=7\), respectively, and therefore fall in the voltage-continuous and smoother regimes of Table~\ref{tab:practical_rules}. 
This distinction appears in the left block through sharp current/voltage peaks and large error excursions for Case~1, and in the right block through the speed and thrust degradation localized at its zero crossings.

The global tracking errors support this observation. 
The speed mean-absolute errors are approximately \(7.27\,\mathrm{Hz}\), \(1.65\,\mathrm{Hz}\), and \(1.26\,\mathrm{Hz}\), while the speed RMS errors are \(11.46\,\mathrm{Hz}\), \(1.95\,\mathrm{Hz}\), and \(1.64\,\mathrm{Hz}\), for Cases~1, 4, and~5, respectively. 
The corresponding thrust RMS errors are \(0.100\,\mathrm{N}\), \(0.036\,\mathrm{N}\), and \(0.031\,\mathrm{N}\). 
The measured thrust ranges are approximately \([-0.593,0.588]\,\mathrm{N}\), \([-0.481,0.506]\,\mathrm{N}\), and \([-0.488,0.513]\,\mathrm{N}\), showing mild thrust asymmetry. 
Overall, Cases~4 and~5 track similarly well, whereas Case~1 has about three times the thrust RMS error and much larger reversal-localized errors.

The local behavior around zero thrust reveals the mechanism most directly. 
Within approximately \(50\,\mathrm{ms}\) of each reversal, Case~1 reaches a maximum local speed error of about \(37.6\,\mathrm{Hz}\), together with localized voltage and current peaks of about \(2.70\,\mathrm{V}\) and \(0.493\,\mathrm{A}\). 
These are finite sampled and saturated peaks; the singular quantity is the ideal motor command predicted by the inverse model. 
By contrast, the corresponding local speed errors in Cases~4 and~5 are about \(1.28\,\mathrm{Hz}\) and \(0.10\,\mathrm{Hz}\), with reversal currents below approximately \(4.0\,\mathrm{mA}\) and \(4.1\,\mathrm{mA}\). 
Their larger global current peaks, approximately \(0.149\,\mathrm{A}\) and \(0.141\,\mathrm{A}\), occur away from zero thrust and are associated with ordinary rotor acceleration and deceleration.

This contrast matches the local scaling behind Table~\ref{tab:practical_rules}. 
For the linear reversal, \(\omega_d\sim\operatorname{sgn}(t-t_s)|t-t_s|^{1/2}\), so \(\dot\omega_d\) is singular at the crossing. 
For Cases~4 and~5, \(\omega_d\) scales as \(\operatorname{sgn}(t-t_s)|t-t_s|^{5/2}\) and \(\operatorname{sgn}(t-t_s)|t-t_s|^{7/2}\), respectively, giving finite and smoother motor references at zero thrust. 
The experiment therefore confirms the main qualitative prediction: reversal-localized tracking degradation and current/voltage peaks appear for generic linear crossings and disappear when the desired thrust is sufficiently flattened at zero.

%%%%%%%%%%%%%%%%%%%%%%%%%%%%%
\section{Conclusion}
\label{conc}
%%%%%%%%%%%%%%%%%%%%%%%%%%%%%

We established exact reproducibility limits for bidirectional propellers under bounded motor inputs. 
The key obstruction is that bidirectional thrust is generated through a signed-quadratic speed-to-thrust map: every thrust reversal must pass through zero rotor speed, where the input authority over the thrust derivative vanishes. 
By deriving a thrust-coordinate normal form, we showed that generic thrust reversals, namely zero crossings with nonzero thrust derivative, require unbounded motor inputs and are therefore not exactly reproducible.

The resulting conditions also provide practical design rules. 
A desired thrust reversal must be sufficiently flat at zero, with the required flatness depending on whether bounded, continuous, or smoother torque/current/voltage commands are needed. 
In particular, linear thrust crossings should be avoided, cubic crossings are the minimum for continuous torque/current, and quintic or higher-order sign-changing crossings provide a practical choice when voltage continuity is required.

We translated these limits into motor-current and motor-voltage requirements for a DC motor driving a bidirectional propeller. 
Experiments on a motor-propeller setup confirmed the predicted reversal-order effect: linear thrust crossings produced localized current/voltage peaks and tracking degradation near zero thrust, whereas higher-order crossings avoided this localized degradation. 
Thus, the observed behavior is a manifestation of the actuator-level reproducibility limit.

These results show that bidirectional propellers should not be treated as ideal signed thrust sources when force, acceleration, or interaction-control references change sign. 
Impedance, admittance, and force-tracking controllers for aerial physical interaction should therefore generate actuator-aware thrust references, for example by flattening, retiming, or otherwise regularizing reversals. 

Future work will integrate these reproducibility constraints into reference generation, control allocation, and whole-body interaction control for fully actuated aerial robots.

\bibliographystyle{IEEEtran}
\bibliography{Refs}
\end{document}